\theoremstyle{plain}
\newtheorem{thm}{\protect\theoremname}
\providecommand{\theoremname}{Theorem}
\newtheorem{lem}{\protect\lemmaname}
\providecommand{\lemmaname}{Lemma}
\DeclareMathOperator*{\argmax}{arg\,max}
\title{Scalable Multi-Class Bayesian Support Vector Machines for Structured and Unstructured Data}
\author{
  Martin Wistuba \\
  IBM Research\\
  Dublin, Ireland\\
  \texttt{martin.wistuba@ibm.com} \\
  \And
  Ambrish Rawat \\
  IBM Research\\
  Dublin, Ireland\\
  \texttt{ambrish.rawat@ie.ibm.com} \\
}
\begin{document}

\maketitle

\begin{abstract}
We introduce a new Bayesian multi-class support vector machine by formulating a pseudo-likelihood for a multi-class hinge loss in the form of a location-scale mixture of Gaussians.
We derive a variational-inference-based training objective for gradient-based learning.
Additionally, we employ an inducing point approximation which scales inference to large data sets. 
Furthermore, we develop hybrid Bayesian neural networks that combine standard deep learning components with the proposed model to enable learning for unstructured data.
We provide empirical evidence that our model outperforms the competitor methods with respect to both training time and accuracy in classification experiments on 68 structured and two unstructured data sets.
Finally, we highlight the key capability of our model in yielding prediction uncertainty for classification by demonstrating its effectiveness in the tasks of large-scale active learning and detection of adversarial images.
\end{abstract}

\section{Introduction} 
\label{sec:introduction}

Maximum margin classifiers like support vector machines (SVMs)~\cite{Cortes1995} are arguably one of the most popular classification models. 
\citet{Cortes1995} famously introduced the linear binary SVM as a novel concept which was later extended by \citet{Boser1992} using the kernel trick~\cite{Hofmann2008} to the non-linear kernel SVM.
Numerous approaches exist to extend binary SVMs to multi-class classification tasks~\cite{Dogan2016}.
One classical approach is to combine multiple binary classifiers using the one-vs-one or one-vs-rest schemes.
Alternate approaches involve defining a multi-class hinge loss~\cite{Dogan2016}, including the seminal work by \citet{Crammer2001}, which learn a single model.

One key aspect of prediction models which is often overlooked by traditional approaches, including SVMs, is the representation and propagation of uncertainty.
In general, decision makers are not solely interested in predictions but also in the confidence about the predictions.
An action might only be taken in the case when the model in consideration is certain about its prediction.
Bayesian formalism provides a principled way to obtain these uncertainties. 
Bayesian methods handle all kinds of uncertainties in a model, be it in inference of parameters or for obtaining the predictions. 
These methods are known to be effective for online classification~\cite{Li2010}, active learning~\cite{Settles2009}, global optimization of expensive black-box functions~\cite{Jones1998}, automated machine learning~\cite{Snoek2012,Wistuba2018}, and as recently noted, even in machine learning security~\cite{Smith2018}. 

Therefore, it is natural to look for a Bayesian extension of SVM classifiers.
\citet{Polson2011} derive a pseudo-likelihood which, when being maximized, is the Bayesian equivalent to training a binary linear SVM.
\citet{Henao2014} extend this work to a non-linear version by modeling the decision function with a Gaussian process.
Furthermore, they propose the use of a sparse approximation of the Gaussian process~\cite{Snelson2005} to scale the Bayesian SVM.
More recently, the use of a variational sparse approximation~\cite{Titsias2009} has been proposed by \citet{Wenzel2017}.

The contributions in this work are threefold.
We derive a pseudo-likelihood for a multi-class hinge loss and propose a multi-class Bayesian SVM.
We provide a scalable learning scheme based on variational inference~\cite{Blei2016,Titsias2009,Hensman2015} to train the multi-class Bayesian SVM.
Additionally, we propose a hybrid Bayesian neural network which combines deep learning components such as convolutional layers with the Bayesian SVM.
This allows to jointly learn the feature extractors as well as the classifier design such that it can be applied both on structured and unstructured data.
We compare the proposed multi-class SVM on 68 structured data sets to a state-of-the-art binary Bayesian SVM with the one-vs-rest approach and the scalable variational Gaussian process~\cite{Hensman2015}.
On average, the multi-class SVM provides better prediction performance and needs up to an order of magnitude less training time in comparison to the competitor methods.
The proposed hybrid Bayesian neural network is compared on the image classification data sets MNIST~\cite{MNIST} and CIFAR-10~\cite{CIFAR10} to a standard (non-Bayesian) neural network.
We show that we achieve similar performance, however, require increased training time.
Finally, we demonstrate the effectiveness of uncertainties in experiments on active learning and adversarial detection.


\section{Related Work} 
\label{sec:related_work}

\citet{Polson2011} make a key observation and reformulate the hinge loss in the linear SVM training objective to a location-scale mixture of Gaussians.
They derive a pseudo-likelihood by introducing local latent variables for each data point and marginalize them out.
A non-linear version of this setup is considered by \citet{Henao2014} where the linear decision function is modeled as a Gaussian process.
They approximate the resulting joint posterior using Markov chain Monte Carlo (MCMC) or expectation conditional maximization (ECM).
Furthermore, they scale the inference using the fully independent training conditional approximation (FITC)~\cite{Snelson2005}.
The basic assumption behind FITC is that the function values are conditionally independent given the set of inducing points.
Then, training the Gaussian process is no longer cubically dependent on the number of training instances.
Moreover, the number of inducing points can be freely chosen.
\citet{Luts2014} extend the work of \citet{Polson2011} by applying a mean field variational approach to it.
Most recently, \citet{Wenzel2017} propose an alternate variational objective and use coordinate ascent to maximize it.
They demonstrate improved performance over a classical SVM, competitor Bayesian SVM approaches, and Gaussian process-based classifiers.

Another important related topic is Gaussian process-based classifiers~\cite{Williams1998}.
As opposed to Bayesian SVMs, these classifiers directly use a decision function with a probit or logit link function~\cite{Rasmussen2006}.
Gaussian process classifiers often perform similar to non-linear SVMs~\cite{Kuss2005} and hence, are preferred by some practitioners due to added advantages like uncertainty representation and automatic hyperparameter determination.
In this aspect the closest work to our approach is scalable variational Gaussian processes~\cite{Hensman2015}.
Like our proposed model, it tackles multi-class classification with a single model and uses variational inference with inducing point approximation to scale to large data sets.

\section{Bayesian Support Vector Machines} 
\label{sec:bayesian_support_vector_machine}

This section details the proposed multi-class Bayesian SVM.
We begin with a discussion of a Bayesian formulation of a binary SVM and follow it with the multi-class case.  

\subsection{Binary SVM} 
\label{sub:prelimaries}

\paragraph{SVM} 
\label{par:svm}

For a binary classification task, support vector machines seek to learn a decision boundary with maximum margin, i.e. the separation between the decision boundary and the instances of the two classes. 
We represent the labeled data for a binary classification task with $N$ observations and $M$-dimensional representation as $D=\{\mathbf{x}_{n},y_n\}_{n=1}^N$, where $\mathbf{x}_{n}\in\mathbb{R}^{M}$ and $y_{n}\in\left\{-1,1\right\}$ represent predictors and labels, respectively.
Training a binary SVM involves learning a decision function $f : \mathbb{R}^{M} \rightarrow \mathbb{R}$ that minimizes the regularized hinge loss,
\begin{align}
\label{eq:emp_risk}
\mathcal{L}\left(D,f,\gamma\right) = \sum_{n=1}^{N} \max\left\{1-y_nf\left(\mathbf{x}_{n}\right),0\right\} + \gamma R\left(f\right) \enspace .
\end{align}
The regularizer $R$ punishes the choice of more complex functions for $f$, and $\gamma$ is a hyperparameter that controls the impact of this regularization.
A linear SVM uses a linear decision function $f(\mathbf{x}_{n}) = \boldsymbol{\theta}^T\mathbf{x}_{n}$.
Non-linear decision functions are traditionally obtained by applying the kernel trick~\cite{Hofmann2008}.


\paragraph{Bayesian Binary SVM} 
\label{par:bayesian_binary_svm}

For the linear case, \citet{Polson2011} show that minimizing Equation \eqref{eq:emp_risk} is equivalent to estimating the mode of a pseudo-posterior (maximum a posteriori estimate)
\begin{align}
\label{eq:map_estimate}
p\left(f|D\right) \propto \exp\left(-\mathcal{L}\left(D, f, \gamma\right)\right) \propto \prod_{n=1}^N L\left(y_n|\mathbf{x}_{n},f\right)p\left(f\right),
\end{align}
derived for a particular choice of pseudo-likelihood factors $L$, defined by location-scale mixtures of Gaussians.
This is achieved by introducing local latent variables $\lambda_{n}$ such that for each instance,
\begin{align}
L\left(y_n|\mathbf{x}_{n},f\right) = \int_0^{\infty}\frac{1}{\sqrt{2\pi\lambda_n}}\exp\left(-\frac{1}{2}\frac{\left(1+\lambda_n-y_n f\left(\mathbf{x}_{n}\right)\right)^2}{\lambda_n}\right)\mathrm{d}\lambda_{n}\enspace .
\end{align}
In their formulations, \citet{Polson2011} and \citet{Henao2014} consider $\gamma$ as a model parameter and accordingly develop inference schemes.
Similar to \citet{Wenzel2017}, we treat $\gamma$ as a hyperparameter and drop it from the expressions of prior and posterior for notational convenience. 
\citet{Henao2014} extend this framework to enable learning of a non-linear decision function $f$.
Both \citet{Henao2014} and \citet{Wenzel2017} consider models where $f(x)$ is sampled from a zero-mean Gaussian process i.e. $\mathbf{f} \sim \mathcal{N}(0,K_{NN})$, where $\mathbf{f}=[f(\mathbf{x}_1),\dots,f(\mathbf{x}_n)]$ is a vector of decision function evaluations and $K_{NN}$ is the covariance function evaluated at data points.
While \citet{Henao2014} consider MCMC and ECM to learn the conditional posterior $p(\mathbf{f}|D,\boldsymbol{\lambda})$, \citet{Wenzel2017} learn an approximate posterior $q(\mathbf{f},\boldsymbol{\lambda})$ with variational inference. 

%
%
%


\subsection{Bayesian Multi-Class SVM} 
\label{sub:multi_class_svm}

A multi-class classification task involves $N$ observations with integral labels $Y = \left\{1,\dots,C\right\}$.
A classifier for this task can be modeled as a combination of a decision function $f : \mathbb{R}^{M} \rightarrow \mathbb{R}^{C}$ and a decision rule to compute the class labels,
$\hat{y}\left(\mathbf{x}_{n}\right) = \argmax_{t\in Y} f_{t}\left(\mathbf{x}_{n}\right)$.
\citet{Crammer2001} propose to minimize the following objective function for learning the decision function $f$:
\begin{align}
\mathcal{L}\left(D,f,\gamma\right) = \sum_{n=1}^{N}\max\left\{ 1+\max_{t\neq y_{n},t\in Y} f_{t}\left(\mathbf{x}_{n}\right) -f_{y_{n}}\left(\mathbf{x}_{n}\right),\ 0\right\} + \gamma R\left(f\right)\enspace ,\label{eq:cs-loss}
\end{align}
where again $\gamma$ is a hyperparameter controlling the impact of the regularizer $R$.
With the prior associated to $\gamma R\left(f\right)$, maximizing the log of Equation \eqref{eq:map_estimate} corresponds to minimizing Equation \eqref{eq:cs-loss} with respect to the parameters of $f$.
This correspondence requires the following equation to hold true for the data-dependent factors of the pseudo-likelihood,
\begin{equation}
\prod_{n=1}^{N}L\left(y_{n}\ |\ \mathbf{x}_{n},f\right)=\exp\left(-2\sum_{n=1}^{N}\max\left\{ 1+\max_{t\neq y_{n},t\in Y} f_{t}\left(\mathbf{x}_{n}\right) - f_{y_{n}}\left(\mathbf{x}_{n}\right),\ 0\right\} \right)\enspace.
\end{equation}
Analogously to \citet{Polson2011}, we show that $L\left(y_n\ |\ \mathbf{x}_{n}, f\right)$ admits a location-scale mixture of Gaussians by introducing local latent variables $\boldsymbol{\lambda} = [\lambda_1,\dots,\lambda_n]$.
This requires the lemma established by \citet{Andrews1974}.
\begin{lem}
\label{lem:lemma}For any $a,b>0$,
\begin{equation}
\int_{0}^{\infty}\frac{a}{\sqrt{2\pi\lambda}}e^{-\frac{1}{2}\left(a^{2}\lambda+b^{2}\lambda^{-1}\right)}\mathrm{d}\lambda=e^{-\left|ab\right|}\enspace.
\end{equation}
\end{lem}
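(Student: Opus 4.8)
The plan is to recognize the integrand as a constant multiple of an inverse-Gaussian density and reduce the claim to the fact that such a density has unit mass. First I would complete the square in the exponent via the elementary identity $a^{2}\lambda + b^{2}\lambda^{-1} = \left(a\lambda^{1/2} - b\lambda^{-1/2}\right)^{2} + 2ab$. This pulls the target factor $e^{-ab}$ outside the integral (and since $a,b>0$ we have $ab = |ab|$), leaving it to prove that
\[
J := \int_{0}^{\infty}\frac{a}{\sqrt{2\pi\lambda}}\exp\left(-\tfrac{1}{2}\left(a\lambda^{1/2} - b\lambda^{-1/2}\right)^{2}\right)\mathrm{d}\lambda = 1 .
\]

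The crux, and the step I expect to require the most care, is evaluating $J$, for which I would use a symmetrization trick. Applying the substitution $\lambda \mapsto b^{2}/(a^{2}\lambda)$, the quadratic $\left(a\lambda^{1/2} - b\lambda^{-1/2}\right)^{2}$ is left invariant (the bracket merely changes sign), while a short Jacobian computation shows that the measure $\tfrac{a}{\sqrt{\lambda}}\,\mathrm{d}\lambda$ is sent to $\tfrac{b}{\lambda^{3/2}}\,\mathrm{d}\lambda$. Hence $J$ equals its ``$b$-weighted'' counterpart, i.e. $J = J'$ where $J' := \int_{0}^{\infty}\frac{b}{\sqrt{2\pi}\,\lambda^{3/2}}\exp\left(-\tfrac{1}{2}\left(a\lambda^{1/2} - b\lambda^{-1/2}\right)^{2}\right)\mathrm{d}\lambda$.

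Adding the two equal expressions, I would then observe that the combined weight is an exact differential. Setting $u = a\lambda^{1/2} - b\lambda^{-1/2}$ gives $\mathrm{d}u = \tfrac{1}{2}\left(a\lambda^{-1/2} + b\lambda^{-3/2}\right)\mathrm{d}\lambda$, so that $\left(\tfrac{a}{\sqrt{\lambda}} + \tfrac{b}{\lambda^{3/2}}\right)\mathrm{d}\lambda = 2\,\mathrm{d}u$. Since $u$ increases monotonically from $-\infty$ to $+\infty$ as $\lambda$ ranges over $(0,\infty)$, changing variables turns $J + J' = 2J$ into the full Gaussian integral $\tfrac{2}{\sqrt{2\pi}}\int_{-\infty}^{\infty} e^{-u^{2}/2}\,\mathrm{d}u = 2$, whence $J = 1$. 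Combined with the completed-square form, this yields the value $e^{-ab} = e^{-|ab|}$, as claimed.

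The only delicate points are the monotonicity and the $\pm\infty$ limits of $u$ (immediate, since $\mathrm{d}u/\mathrm{d}\lambda>0$) and the Jacobian bookkeeping in the symmetrizing substitution; both are routine. An alternative route is the Cauchy--Schl\"omilch/Glasser substitution $s = a\lambda^{1/2}$, which reduces $J$ to $\tfrac{2}{\sqrt{2\pi}}\int_{0}^{\infty} e^{-(s - ab/s)^{2}/2}\,\mathrm{d}s$; however, invoking that identity ultimately rests on the same symmetrization, so I would present the self-contained argument above.
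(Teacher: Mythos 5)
Your proof is correct, but there is no proof in the paper to compare it against: the paper states this lemma as a known result, attributing it to \citet{Andrews1974} (it is the same identity that underlies the binary construction of \citet{Polson2011}), and then uses it purely as a black box in the proof of Theorem 1. Your argument is therefore a self-contained substitute for that citation, following the classical Cauchy--Schl\"omilch route: completing the square $a^{2}\lambda+b^{2}\lambda^{-1}=\left(a\lambda^{1/2}-b\lambda^{-1/2}\right)^{2}+2ab$ extracts the factor $e^{-ab}=e^{-|ab|}$; the substitution $\lambda\mapsto b^{2}/(a^{2}\lambda)$ fixes the squared bracket (the bracket changes sign) while exchanging the weight $a\lambda^{-1/2}\,\mathrm{d}\lambda$ for $b\lambda^{-3/2}\,\mathrm{d}\lambda$, giving $J=J'$; and the change of variables $u=a\lambda^{1/2}-b\lambda^{-1/2}$, which is strictly increasing from $-\infty$ to $+\infty$ with $2\,\mathrm{d}u=\left(a\lambda^{-1/2}+b\lambda^{-3/2}\right)\mathrm{d}\lambda$, collapses $2J=J+J'$ into a full Gaussian integral, so $J=1$. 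I checked each of these steps and they are sound; equivalently, $J=1$ is the statement that the inverse-Gaussian density has unit mass, which is exactly the reduction you announce at the outset. What the citation buys the paper is brevity; what your derivation buys is independence from the reference plus one useful piece of transparency: since the left-hand side depends on $b$ only through $b^{2}$ and the right-hand side only through $|b|$, the identity extends to all real $b$ (evenness for $b\neq 0$, continuity at $b=0$). That extension is actually needed, because Theorem 1 applies the lemma with $b=1+\max_{t\neq y_{n},t\in Y}f_{t}\left(\mathbf{x}_{n}\right)-f_{y_{n}}\left(\mathbf{x}_{n}\right)$, which can be negative or zero.
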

\begin{thm}
The pseudo-likelihood contribution from an observation $y_{n}$ can be expressed as
\begin{equation}
L\left(y_{n}\ |\ \mathbf{x}_{n},f\right)=\int_{0}^{\infty}\frac{1}{\sqrt{2\pi\lambda_{n}}}\exp\left(-\frac{1}{2}\frac{\left(1+\lambda_{n}+\max_{t\neq y_{n},t\in Y}f_{t}\left(\mathbf{x}_{n}\right)-f_{y_{n}}\left(\mathbf{x}_{n}\right)\right)^{2}}{\lambda_{n}}\right)\mathrm{d}\lambda_{n}
\end{equation}
\end{thm}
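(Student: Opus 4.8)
The plan is to verify that the claimed integral evaluates in closed form to the per-observation factor $\exp\!\left(-2\max\{1+z_n,0\}\right)$, where I abbreviate $z_n = \max_{t\neq y_n,t\in Y} f_t(\mathbf{x}_n) - f_{y_n}(\mathbf{x}_n)$. Once each factor matches this exponential, taking the product over $n=1,\dots,N$ immediately reproduces the required correspondence
\begin{equation}
\prod_{n=1}^{N}L\left(y_{n}\ |\ \mathbf{x}_{n},f\right)=\exp\left(-2\sum_{n=1}^{N}\max\left\{ 1+z_n,\ 0\right\} \right)\enspace,
\end{equation}
which is precisely the pseudo-likelihood identity demanded for the multi-class setting. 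So the whole theorem reduces to a single scalar integral evaluation.

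The key algebraic step is to set $\beta_n = 1 + z_n$ and expand the quadratic appearing in the exponent as
\begin{equation}
\frac{\left(\beta_n+\lambda_n\right)^2}{\lambda_n} = \frac{\beta_n^2}{\lambda_n} + 2\beta_n + \lambda_n\enspace.
\end{equation}
The cross term $2\beta_n$ is independent of the integration variable and factors out as a constant $e^{-\beta_n}$, leaving an integrand of exactly the shape covered by Lemma~\ref{lem:lemma}. First I would pull out $e^{-\beta_n}$, then apply the lemma established by \citet{Andrews1974} with the choices $a=1$ and $b=\beta_n$, which yields
\begin{equation}
\int_{0}^{\infty}\frac{1}{\sqrt{2\pi\lambda_n}}\,e^{-\frac{1}{2}\left(\lambda_n+\beta_n^2\lambda_n^{-1}\right)}\,\mathrm{d}\lambda_n = e^{-\left|\beta_n\right|}\enspace.
\end{equation}
Combining the two pieces gives $L\left(y_n\ |\ \mathbf{x}_n,f\right) = e^{-\beta_n-\left|\beta_n\right|}$.

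The final step is an elementary case distinction to identify $e^{-\beta_n-\left|\beta_n\right|}$ with $e^{-2\max\{\beta_n,0\}}$: when $\beta_n\geq 0$ we have $\left|\beta_n\right|=\beta_n$ so the exponent is $-2\beta_n=-2\max\{\beta_n,0\}$, while for $\beta_n<0$ the exponent collapses to $0=-2\max\{\beta_n,0\}$. Substituting back $\beta_n=1+z_n$ completes the match. I do not anticipate a genuine obstacle here, since the Andrews--Mallows lemma does the analytic heavy lifting; the only point requiring care is confirming that the factor of $2$ in the target loss is reproduced correctly, which is exactly what the clean identity $-\beta-\left|\beta\right|=-2\max\{\beta,0\}$ guarantees. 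A secondary point worth remarking is that the inner maximum over $t\neq y_n$ plays no role in the integration — it merely enters through the constant $\beta_n$ — so the multi-class argument is structurally identical to the binary derivation of \citet{Polson2011}.
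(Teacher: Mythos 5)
Your proof is correct and takes essentially the same route as the paper's: both apply Lemma \ref{lem:lemma} with $a=1$ and $b=1+\max_{t\neq y_n,t\in Y}f_t\left(\mathbf{x}_n\right)-f_{y_n}\left(\mathbf{x}_n\right)$, absorb the cross term $e^{-b}$ into the completed square $\left(b+\lambda_n\right)^2/\lambda_n$, and conclude via $-b-\left|b\right|=-2\max\left\{b,0\right\}$. The only cosmetic differences are the direction of the algebra (you expand the square and factor out $e^{-\beta_n}$, whereas the paper multiplies the lemma's conclusion through by $e^{-b}$) and your explicit case analysis in place of the paper's identity $\max\left\{b,0\right\}=\frac{1}{2}\left(\left|b\right|+b\right)$.
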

\begin{proof}
Applying Lemma \ref{lem:lemma} while substituting $a=1$ and $b=1+\max_{t\neq y_{n},t\in Y}f_{t}\left(\mathbf{x}_{n}\right)-f_{y_{n}}\left(\mathbf{x}_{n}\right)$,
multiplying through by $e^{-b}$, and using the identity $\max\left\{b,0\right\}=\frac{1}{2}\left(\left|b\right|+b\right)$, we get,
\begin{equation}
\int_{0}^{\infty}\frac{1}{\sqrt{2\pi\lambda_{n}}}\exp\left(-\frac{1}{2}\frac{\left(b+\lambda_{n}\right)^{2}}{\lambda_{n}}\right)\mathrm{d}\lambda_{n}=e^{-2\max\left\{ b,0\right\} }\enspace .
\end{equation}
\end{proof}
\paragraph{Inference}
We complete the model formulation by assuming that $f_j(\mathbf{x})$ is drawn from a Gaussian process for each class, $j$, i.e. $\mathbf{f}_j \sim \mathcal{N}(0,K_{NN})$ and $\boldsymbol{\lambda} \sim \mathbbm{1}_{[0,\infty)}(\boldsymbol{\lambda})$.
Inference in our model amounts to learning the joint posterior $p(\mathbf{f},\boldsymbol{\lambda}|D)$, where $\mathbf{f} = [\mathbf{f}_1,\dots,\mathbf{f}_C]$.
However, computing the exact posterior is intractable and hence various schemes can be adopted to approximate it.
In our approach we use variational inference combined with an inducing point approximation for scalable learning.

\subsection{Scalable Variational Inference with Inducing Points}
\label{sub:inference}

In variational inference, the exact posterior over the set of model parameters $\boldsymbol{\theta}$ is approximated by a variational distribution $q$.
The parameters of $q$ are updated with the aim to reduce the dissimilarity between the exact and approximate posteriors, as measured by the Kullback-Leibler divergence.
This is equivalent to maximizing the evidence lower bound (ELBO) ~\cite{Jordan1999} with respect to parameters of $q$.
\begin{equation}
\label{eq:elbo}
\operatorname{ELBO} = \mathbb{E}_{q(\boldsymbol{\theta})}\left[\log p\left(\mathbf{y}|\boldsymbol{\theta}\right)\right] - \operatorname{KL}\left[q\left(\boldsymbol{\theta}\right)||p\left(\boldsymbol{\theta}\right)\right]
\end{equation}
Using this objective function, we could potentially infer the posterior $q(\mathbf{f},\boldsymbol{\lambda})$.
However, inference and prediction using this full model involves inverting an $N\times N$ matrix.
An operation of complexity $O(N^3)$ is impractical.
Therefore, we employ the sparse approximation proposed by \citet{Hensman2015}.
We augment the model with $P\ll N$ inducing points which are shared across all Gaussian processes.
Similar to \citet{Hensman2015}, we consider a Gaussian process prior for the inducing points, $p(\mathbf{u}_j) = \mathcal{N}\left(0,K_{PP}\right)$ and consider the marginal $q(\mathbf{f}_j) = \int p(\mathbf{f}_j|\mathbf{u}_j)q(\mathbf{u}_j)\mathrm{d}\mathbf{u}_j$ with $p(\mathbf{f}_j|\mathbf{u}_j) = \mathcal{N}\left(\kappa \mathbf{u},\tilde{K}\right)$.
The approximate posterior $q(\mathbf{u},\boldsymbol{\lambda})$ factorizes as $\prod_{j\in Y}q(\mathbf{u}_j)\prod_{n=1}^Nq(\lambda_n)$ with $q(\lambda_n) = \mathcal{GIG}(1/2,1,\alpha_n)$ and $q(\mathbf{u}_j) = \mathcal{N}(\boldsymbol{\mu}_j,\Sigma_j)$. 
Here, $\kappa=K_{NP}K^{-1}_{PP}$, $\tilde{K}=K_{NN}-K_{NP}\kappa^{T}$ and $\mathcal{GIG}$ is the generalized inverse Gaussian.
$K_{PP}$ is the kernel matrix resulting from evaluating the kernel function between all inducing points.
$K_{NP}$ or $K_{NN}$ are accordingly defined.
The choice of variational approximations is inspired from the exact conditional posterior computed by \citet{Henao2014}.
With an application of Jensen's inequality to Equation \eqref{eq:elbo}, we derive the final training objective,
\begin{align}
& \mathbb{E}_{q(\mathbf{u},\boldsymbol{\lambda})}\left[\log p\left(\mathbf{y}|\mathbf{u},\boldsymbol{\lambda} \right)\right] - \operatorname{KL}\left[q\left(\mathbf{u},\boldsymbol{\lambda} \right)||p\left(\mathbf{u},\boldsymbol{\lambda} \right)\right]\\
\geq & \mathbb{E}_{q(\mathbf{u},\boldsymbol{\lambda})}\left[\mathbb{E}_{p(\mathbf{f}|\mathbf{u})}\left[\log p\left(\mathbf{y},\boldsymbol{\lambda}|\mathbf{f}\right)\right]\right]  + \mathbb{E}_{q(\mathbf{u})}[\log p\left(\mathbf{u}\right)] - \mathbb{E}_{q(\mathbf{u},\boldsymbol{\lambda})}[\log q(\mathbf{u},\boldsymbol{\lambda})]\\
= & \sum_{n=1}^{N}\left(-\frac{1}{2\sqrt{\alpha_{n}}}\left(2\tilde{K}_{n,n}+\left(1+\boldsymbol{\kappa}_{n}\left(\boldsymbol{\mu}_{t_{n}}-\boldsymbol{\mu}_{y_{n}}\right)\right)^{2}+\boldsymbol{\kappa}_{n}\Sigma_{t_n}\boldsymbol{\kappa}_{n}^{\intercal}+\boldsymbol{\kappa}_{n}\Sigma_{y_n}\boldsymbol{\kappa}_{n}^{\intercal}-\alpha_{n}\right)\right.\nonumber\\
 & \phantom{\sum_{i=1}^{n}\left(\right)}\left.-\boldsymbol{\kappa}_{n}\left(\boldsymbol{\mu}_{t_{n}}-\boldsymbol{\mu}_{y_{n}}\right)-\frac{1}{4}\log\alpha_{n}-\log\left(B_{\frac{1}{2}}\left(\sqrt{\alpha_{n}}\right)\right)\right)\nonumber\\
 & \phantom{\sum_{i=1}^{n}\left(\right)}-\frac{1}{2}\sum_{j\in Y}\left(-\log\left|\Sigma_{j}\right|+\text{trace}\left(K^{-1}_{PP}\Sigma_{j}\right)+\boldsymbol{\mu}_{j}^{\intercal}K^{-1}_{PP}\boldsymbol{\mu}_{j}\right) = \mathcal{O}\enspace,
\end{align}
where $B_{\frac{1}{2}}$ is the modified Bessel function~\cite{Joergensen1982}, and $t_{n}=\argmax_{t\in Y,t\neq y_{n}}f_{t}\left(\mathbf{x}_{n}\right)$.
$\mathcal{O}$ is maximized using gradient-based optimization methods.
We provide a detailed derivation of the variational objective and its gradients in the appendix.





\subsection{Hybrid Bayesian Neural Networks}\label{sub:hybrid-bnn}
\begin{figure}
  \centering
  \includegraphics[width=0.7\textwidth]{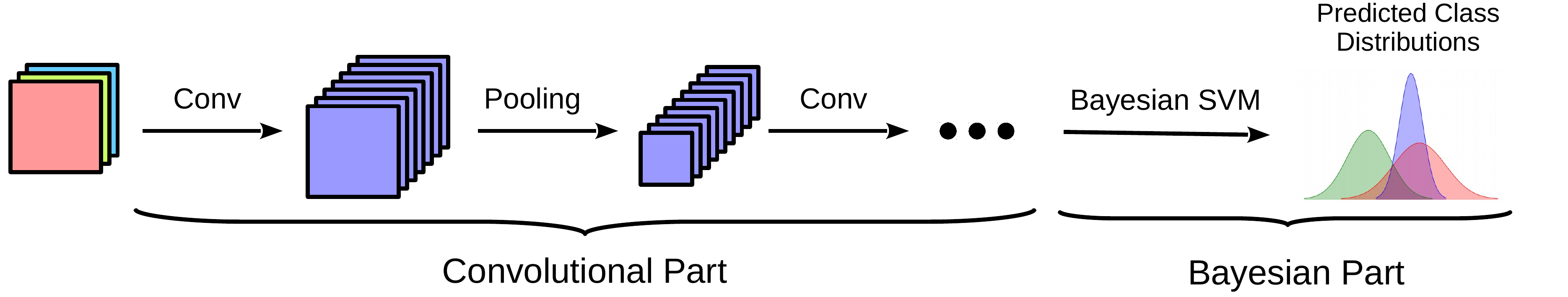}
  \caption{Hybrid Bayesian neural network with a Bayesian SVM for image classification.\label{fig:hybrid_nn}}
\end{figure}
In Section \ref{sub:inference} we show that our proposed multi-class Bayesian SVM can be learned with gradient-based optimization schemes.
This enables us to combine it with various deep learning components such as convolutional layers and extend its applicability to unstructured data as shown in Figure \ref{fig:hybrid_nn}.
The parameters of the convolution and the variational parameters are jointly learned by means of backpropagation.
\begin{figure}
  \includegraphics[width=0.43\textwidth]{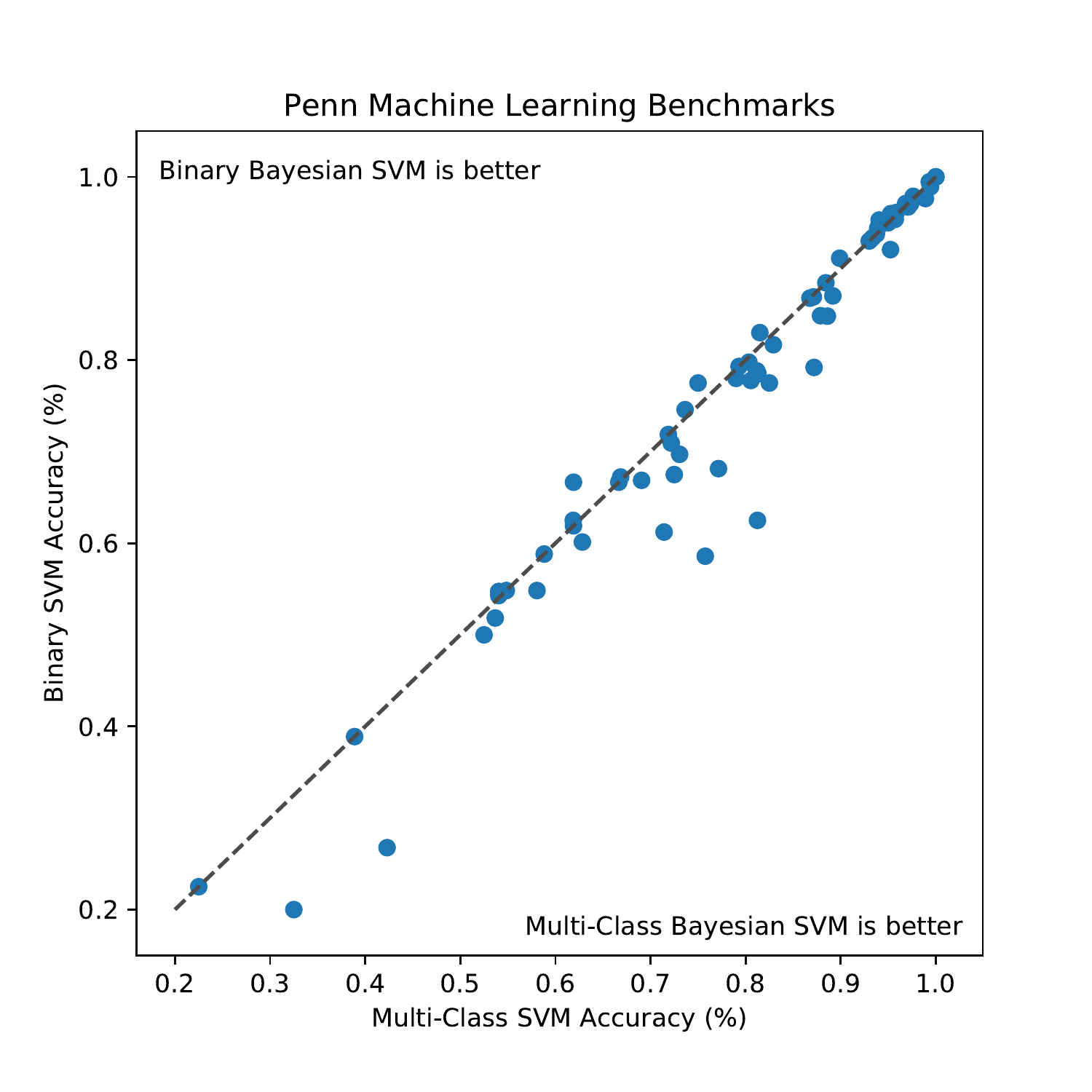}
  \hfill
  \includegraphics[width=0.43\textwidth]{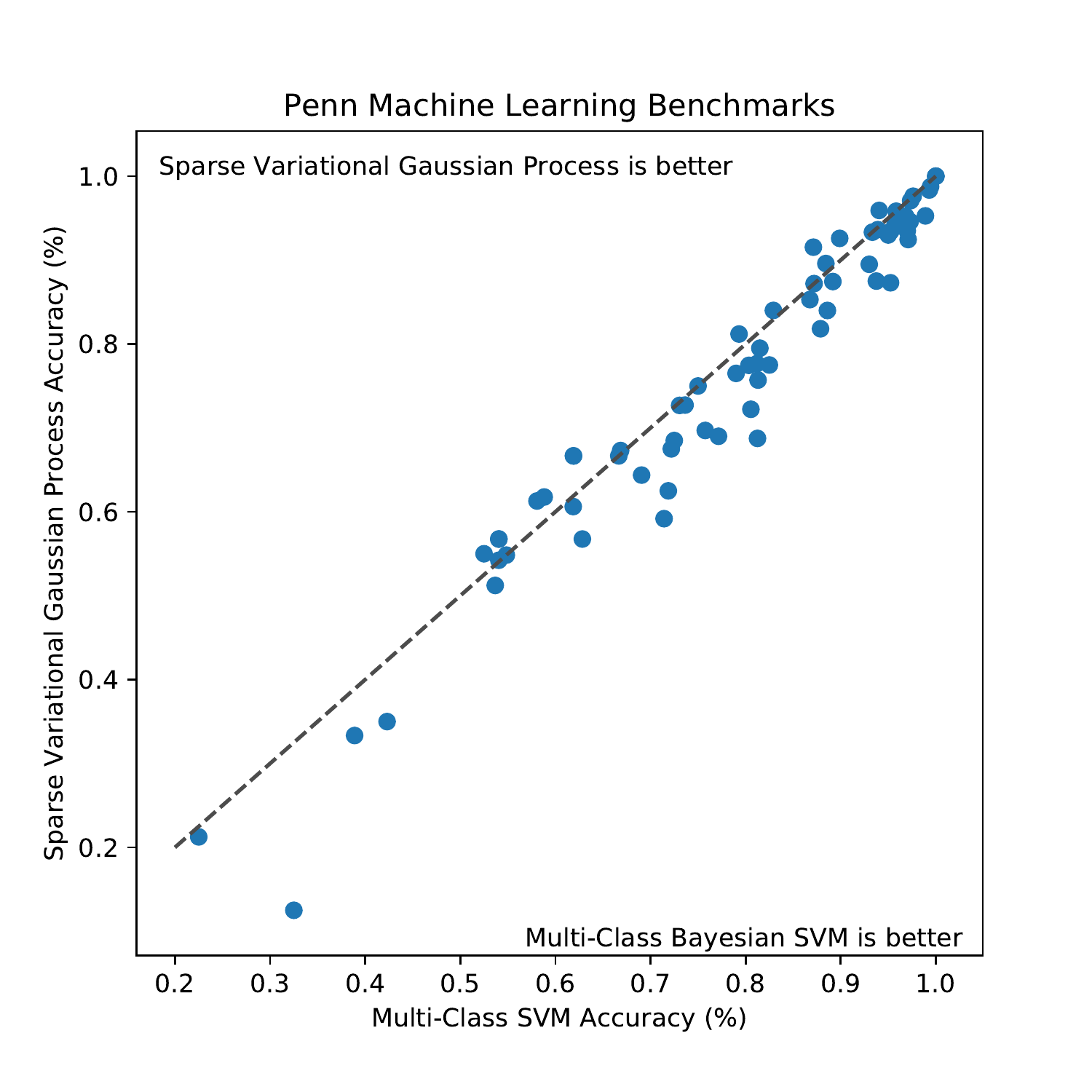}
  \caption{Pairwise comparison of the multi-class Bayesian SVM versus the binary Bayesian SVM and SVGP. On average, our proposed multi-class Bayesian SVM provides better results.\label{fig:classification}}
\end{figure}

\section{Experimental Evaluation}

In this section we conduct an extensive study of the multi-class Bayesian SVM and analyze its classification performance on structured and unstructured data.
Additionally, we analyze the quality of its uncertainty prediction in a large-scale active learning experiment and for the challenging problem of adversarial image detection.

%

\subsection{Classification of Structured Data}\label{sub:structured-data}

\begin{table}[t]
  \caption{Mean average rank across 68 data sets. The smaller, the better. Our proposed multi-class Bayesian SVM is on average the most accurate prediction model.}
  \label{tab:classification}
  \centering
  \begin{tabular}{rrr}
    \toprule
    Binary Bayesian SVM     & Multi-Class Bayesian SVM     & Scalable Variational Gaussian Process \\
    \midrule
    1.96 & \textbf{1.68}  & 2.33     \\
    \bottomrule
  \end{tabular}
\end{table}
We evaluate the proposed multi-class Bayesian support vector machine with respect to classification accuracy on the Penn Machine Learning Benchmarks~\cite{Olson2017}.
From this benchmark, we select all multi-class classification data sets consisting of at least 128 instances.
This subset consists of 68 data sets with up to roughly one million instances.
We compare the classification accuracy of our proposed multi-class Bayesian SVM with the most recently proposed binary Bayesian support vector machine~\cite{Wenzel2017} (one-vs-rest setup) and the scalable variational Gaussian process (SVGP)~\cite{Hensman2015}.
We use the implementation available in GPflow~\cite{GPflow2017} for SVGP and implement the binary and multi-class Bayesian SVM as additional classifiers in GPflow by extending its classifier interface.
The shared back end of all three implementations allows a fair training time comparison.
For this experiment, all models are trained using 64 inducing points.
Gradient-based optimization is performed using Adam~\cite{Kingma2014} with an initial learning rate of $5\cdot10^{-4}$ for 1000 epochs.

Figure~\ref{fig:classification} contrasts the multi-class Bayesian SVM with its binary counterpart and SVGP.
The proposed multi-class Bayesian SVM clearly outperforms the other two models for most data sets.
While this is more pronounced against SVGP, the binary and multi-class Bayesian SVMs exhibit similar behavior.
This claim is supported by the comparison of mean ranks (Table~\ref{tab:classification}).
The rank per data set is computed by ranking the methods for each data set according to classification accuracy.
The most accurate prediction model is assigned rank 1, second best rank 2 and so on.
In case of ties, an average rank is used, e.g. if the models exhibit classification accuracies of 1.0, 1.0, and 0.8, they are assigned ranks of 1.5, 1.5, and 3, respectively.

\begin{figure}[t]
  \includegraphics[width=1\textwidth]{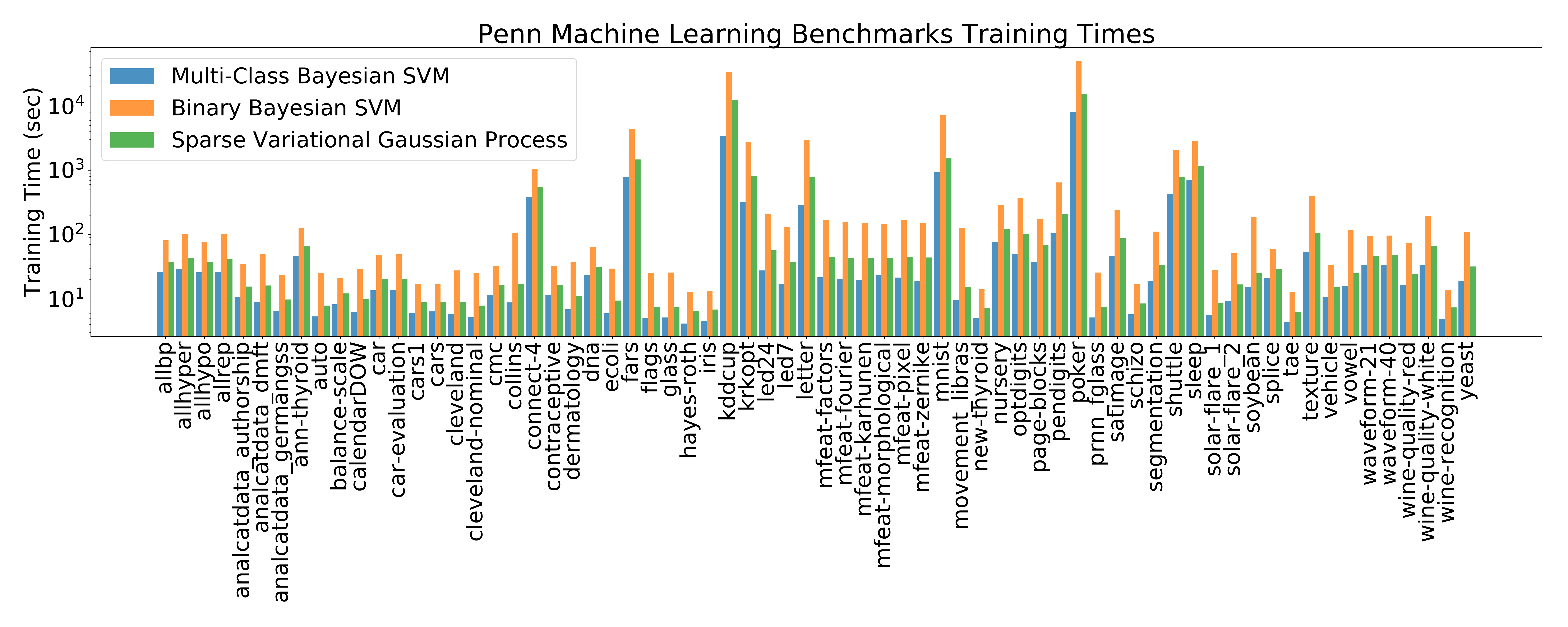}
  \caption{Our proposed multi-class Bayesian SVM clearly needs less time than its competitors.}\label{fig:classification_time}
\end{figure}
The primary motivation for proposing the multi-class Bayesian SVM is scalability.
Classification using the binary Bayesian SVM requires training an independent model per class which increases the training time by a factor equal to the number of classes.
Contrastingly, SVGP and our proposed model enable multi-class classification with a single model.
This results in significant benefits in training time.
As evident in Figure~\ref{fig:classification_time}, the multi-class Bayesian SVM requires the least training time.

In conclusion, the multi-class Bayesian SVM is the most efficient model without compromising on prediction accuracy.
In fact, on average it has a higher accuracy.

\subsection{Classification of Image Data}\label{sub:image-classification}
\begin{figure}[t]
  \centering
  \includegraphics[width=0.44\textwidth]{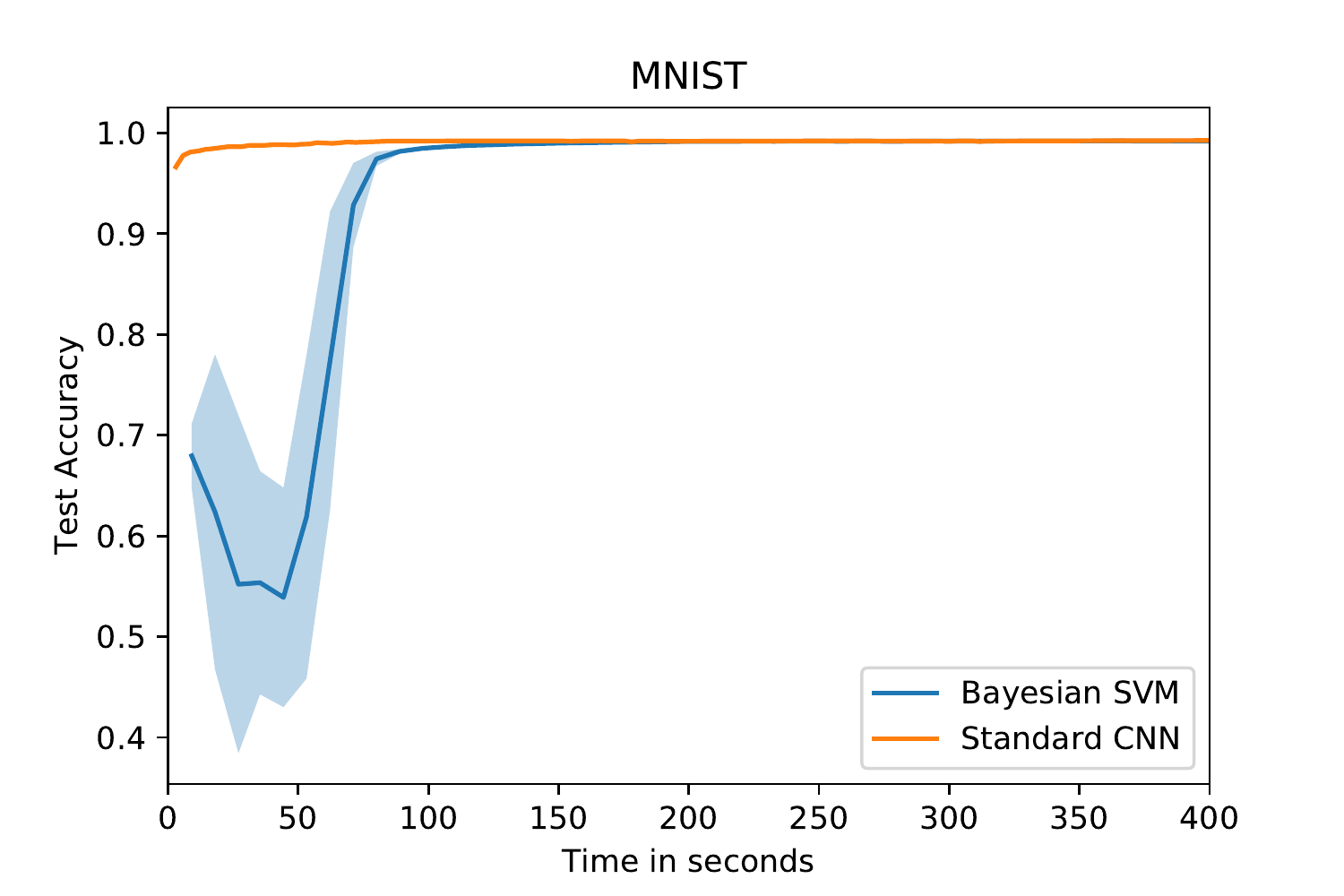}
  \hspace{1cm}
  \includegraphics[width=0.44\textwidth]{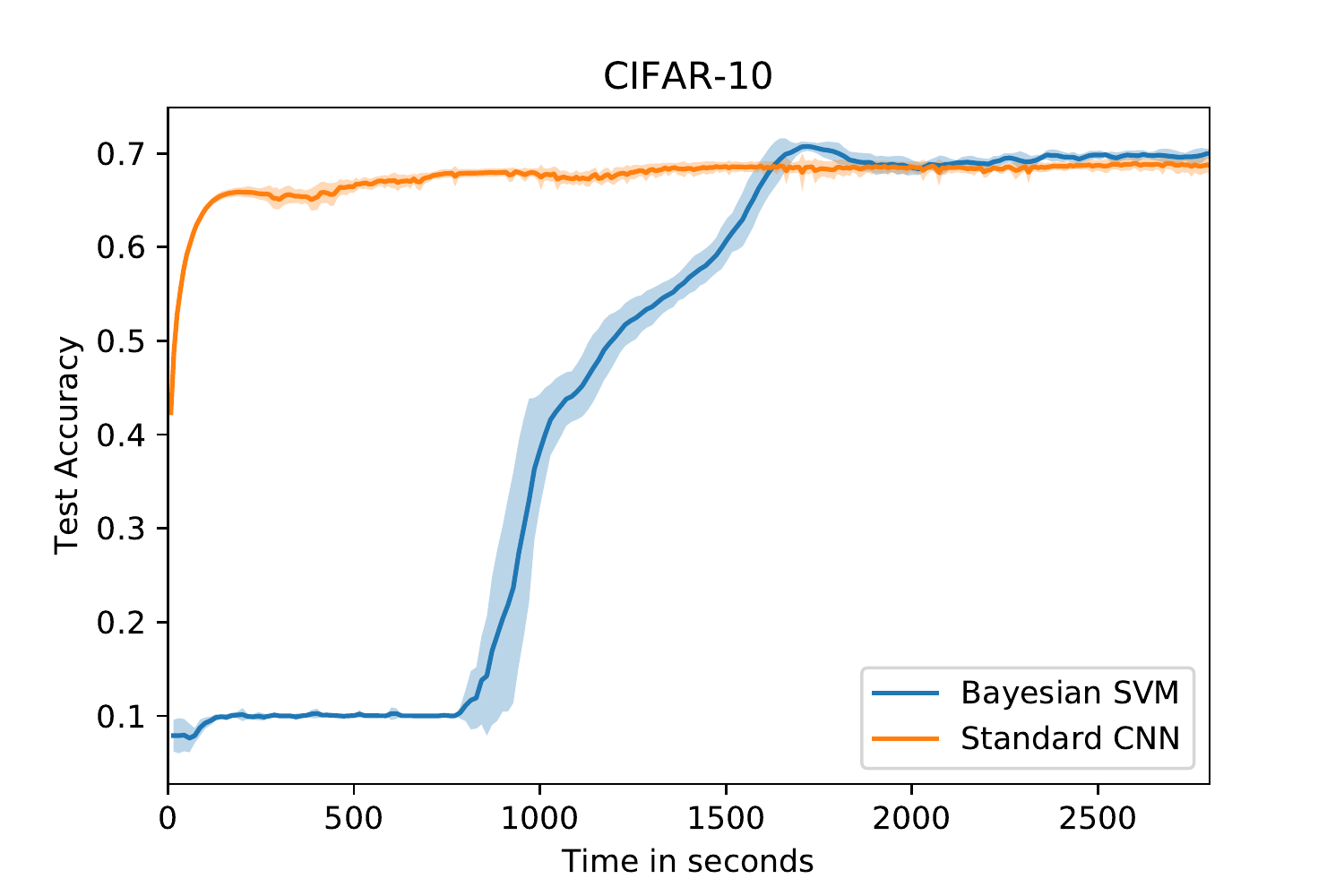}
  \caption{The jointly learned model of a convolutional network and a Bayesian SVM performs as good as a standard network. The price of gaining a Bayesian neural network is a longer training time.}
  \label{fig:image_classification}
\end{figure}
In Section \ref{sub:hybrid-bnn} we describe how deep learning can be used to learn a feature representation jointly with a multi-class Bayesian SVM.
Image data serves as a typical example for unstructured data.
We compare the hybrid Bayesian neural network to a standard convolutional neural network (CNN) with a softmax layer for classification.
We evaluate these models on two popular image classification benchmarks, MNIST~\cite{MNIST} and CIFAR-10~\cite{CIFAR10}.

We observe same performance of the hybrid Bayesian neural network using the Bayesian SVM as a standard CNN with softmax layer.
The two different neural networks share the first set of layers, for MNIST: \texttt{conv(32,5,5)-conv(64,3,3)-max\_pool-fc(1024)-fc(100)}, and for CIFAR-10: \texttt{conv(128,3,3)-conv(128,3,3)-max\_pool-conv(128,3,3)-max\_pool-fc(256)-fc(100)}.
As in our previous experiment, we use Adam to perform the optimization.

Figure~\ref{fig:image_classification} shows that the hybrid Bayesian neural network achieves the same test accuracy as the standard CNN.
The additional training effort of a hybrid Bayesian neural network pays off in achieving probabilistic predictions with uncertainty estimates.
While the variational objective and the likelihood exhibits the expected behavior during the training, we note an odd behavior during the initial epochs.
We suspect that this is due to initialization of parameters which could result in the KL-term of the variational objective dominating the expected log-likelihood.

\subsection{Uncertainty Analysis}
One of the advantages of using Bayesian machine learning models lies in getting a distribution over predictions rather than just point-estimates.
In this section, we demonstrate this advantage in the domains of active learning and adversarial image detection.

\subsubsection{Active Learning}
\begin{figure}[t]
  \centering
  \subfloat[Average rank across 68 data sets.\label{fig:active_learning_summary}]
  {\includegraphics[width=0.44\textwidth]{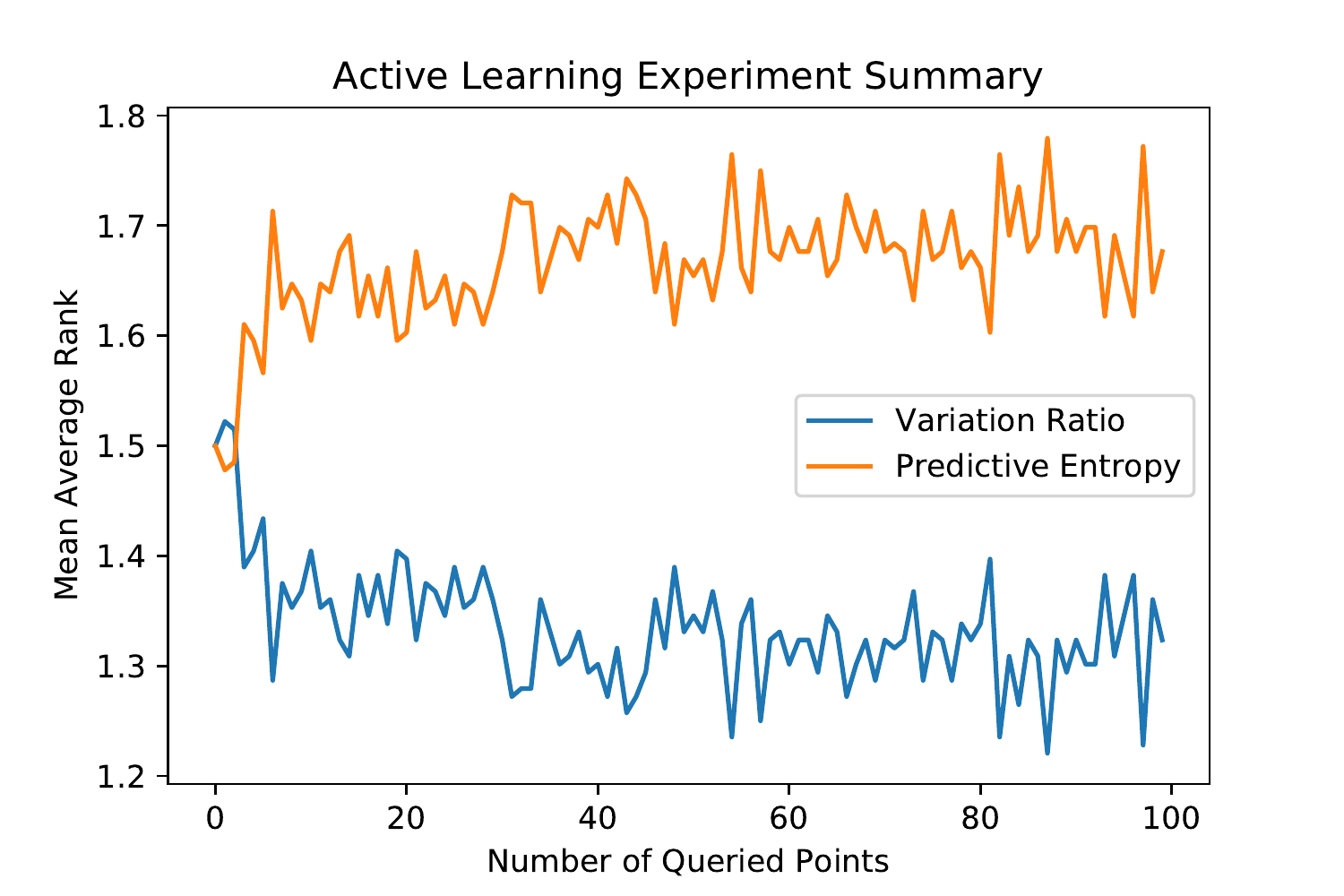}}
  \hspace{1cm}
  \subfloat[Representative results for the largest data set.\label{fig:active_learning_poker}]
  {\includegraphics[width=0.44\textwidth]{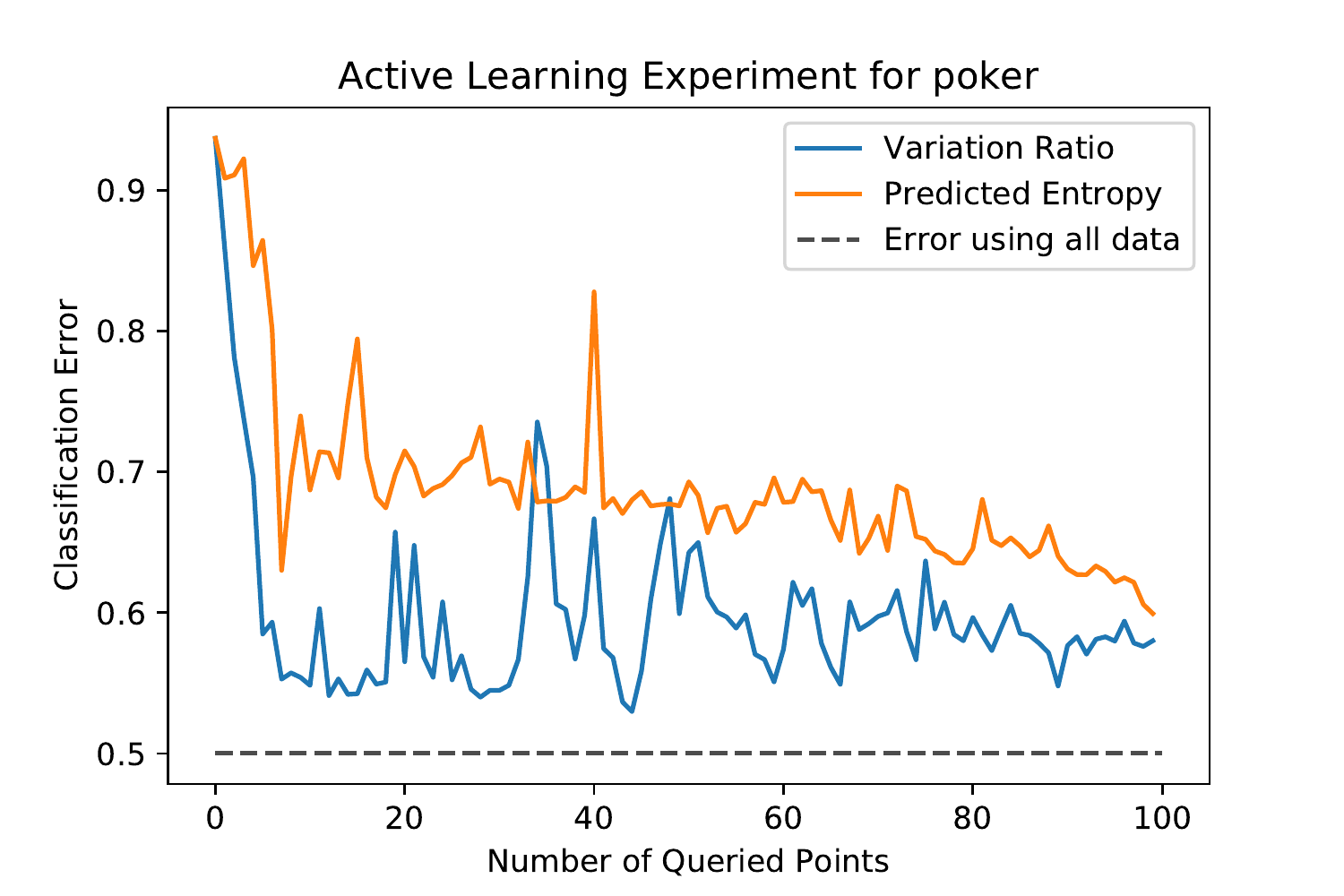}}
  \caption{The Bayesian query policy (variation ratio) decreases the error of the model faster and clearly outperforms the policy based on point-estimates only. For both figures, the smaller the better.}
\end{figure}
Active learning is concerned with scenarios where the process of labeling data is expensive.
In such scenarios, a query policy is adopted to label samples from a large pool of unlabeled instances with the aim to improve model performance.
We contrast between two policies to highlight the merits of using prediction uncertainty obtained from the multi-class Bayesian SVM.
While the first policy utilizes both mean and variance of the predictive distribution of the Bayesian SVM, the second policy relies only on the mean.
For this experiment we use the same data sets as specified in Section~\ref{sub:structured-data}.

We use the variation ratio (VR) as the basis of a Bayesian query policy.
It is defined by
\begin{equation}
 \text{Variation Ratio} = 1 - F/S\enspace,\label{eq:variation-ratio}
\end{equation}
where $F$ is the frequency of the mode and $S$ the number of samples.
The VR is the relative number of times the majority class is not predicted.
The instance with highest VR is queried.
We compare this to a policy which queries the instance with maximum entropy of class probabilities.
These are computed using softmax over the mean predictions.
For a fair comparison, we the same multi-class Bayesian SVM for both policies.
Initially, one instance per class, selected uniformly at random, is labeled.
Then, one hundred further instances are queried according to the two policies.
As only few training examples are available, we modify the training setup by reducing the number of inducing points to four.

We report the mean average rank across 68 data sets for the two different query policies in Figure~\ref{fig:active_learning_summary}.
Since both policies start with the same set of labeled instances, the performance is very similar at the beginning.
However, with increasing number of queried data points, the Bayesian policy quickly outperforms the other policy.
Of the 68 data sets, the \textit{poker} data set, with more than one million instances, is the largest and consequently the most challenging.
Within the first queries, we observe a large decrease in classification error as shown in Figure~\ref{fig:active_learning_poker}.
We note the same trend of mean ranks across the two policies.
The small number of labeled instances is obviously insufficient to reach the test error of a model trained on all data points as shown by the dashed line.
\begin{figure}[t]
  \centering
  \includegraphics[width=0.44\textwidth]{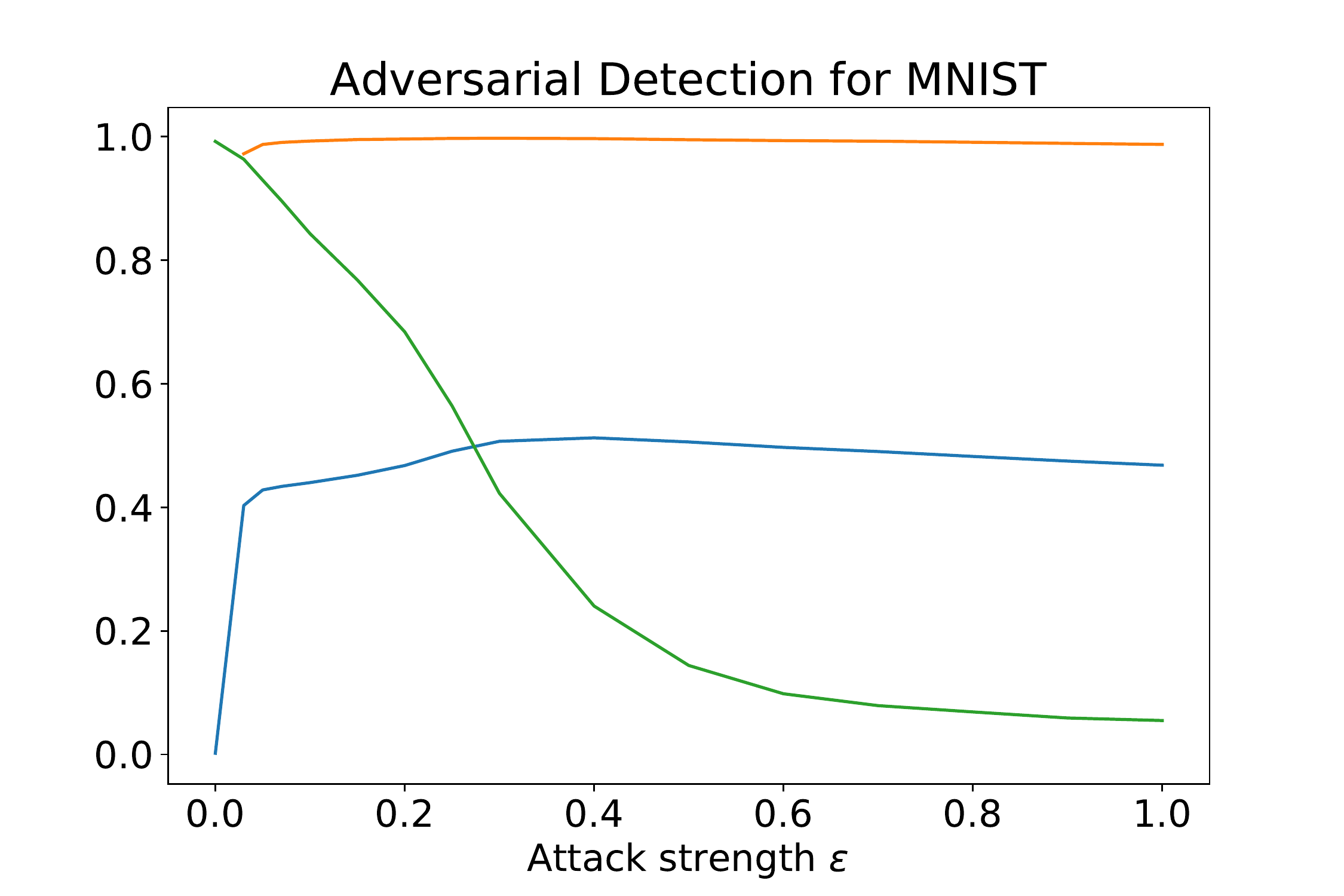}
  \hspace{1cm}
  \includegraphics[width=0.44\textwidth]{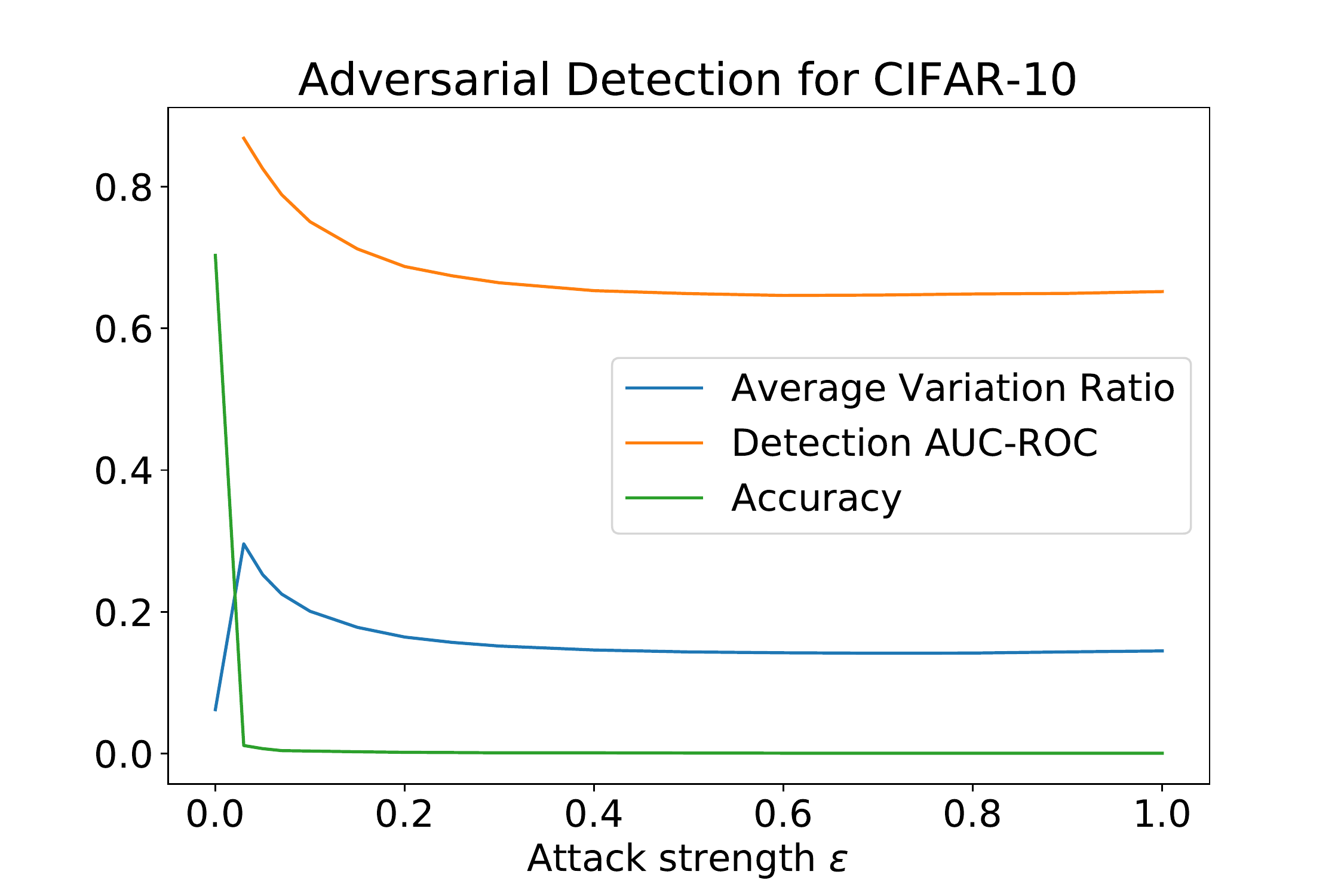}
  \caption{The accuracy on adversarial images decreases with increasing attack strength.
  A significant increase of the average variation ratio indicates that it is a good feature to detect adversarial images.
  }
  \label{fig:adversarial_detection}
\end{figure}

\subsubsection{Adversarial Image Detection}
With the rise of Deep Learning, its security and reliability is a major concern.
A recent development in this direction is the discovery of adversarial images~\cite{Goodfellow2014}.
These correspond to images obtained by adding small imperceptible adversarial noise resulting in high confidence misclassification.
While various successful attacks exist, most defense and detection methods do not work~\cite{Carlini2017}.
However, \citet{Carlini2017} acknowledge that the uncertainty obtained from Bayesian machine learning models is the most promising research direction.
Several studies show that Bayesian models behave differently for adversarial examples compared to the original data~\cite{Bradshaw2017,Li2017,Smith2018,Rawat2017}.
We take a step further and use the variation ratio (VR) determined by the Bayesian SVM, as defined in Equation~\eqref{eq:variation-ratio}, for building a detection model for adversarial images.

We attack the hybrid Bayesian neural network described in Section~\ref{sub:image-classification} with the popular Fast Gradient Sign Method (FGSM)~\cite{Goodfellow2014}.
We generate one adversarial image per image in the test set.
We present the results for detection and classification under attack in Figure~\ref{fig:adversarial_detection}.
The Bayesian SVM is not robust to FGSM since its accuracy drops with increasing attack strength $\epsilon$.
However, the attack does not remain unperceived.
The VR rapidly increases and enables the detection of adversarial images.
The ranking of original and adversarial examples with respect to VR yields an ROC-AUC of almost 1 for MNIST.
This means that the VR computed for any original example is almost always smaller than the one computed for any adversarial example.

CIFAR-10 exhibits different results under the same setup.
Here, the detection is poor and it significantly worsens with increasing attack strength.
Potentially, this is an artifact of the poor classification model for CIFAR-10.
In contrast to the MNIST classifier, this model is under-confident on original examples.
Thus, a weaker attack succeeds in reducing the test accuracy to 1.16\%.
We believe a better network architecture combined with techniques such as data augmentation will lead to an improved performance in terms of test accuracy and subsequently better detection.
Nevertheless, the detection performance of our model is still better than a random detector, even for the strongest attack.

\section{Conclusions}

We devise a pseudo-likelihood for the multi-class hinge loss leading to the first multi-class Bayesian support vector machine.
Additionally, we derive a variational training objective for the proposed model and develop a scalable inference algorithm to optimize it.
We establish the efficacy of the model on multi-class classification tasks with extensive experimentation on structured data and contrast its accuracy to two state-of-the-art competitor methods.
We provide empirical evidence that our proposed method is on average better and up to an order of magnitude faster to train.
Furthermore, we extend our formulation to a hybrid Bayesian neural network and report comparable accuracy to standard models for image classification tasks.
Finally, we investigate the key advantage of Bayesian modeling in our approach by demonstrating the use of prediction uncertainty in solving the challenging tasks of active learning and adversarial image detection.
The uncertainty-based policy outperforms its competitor in the active learning scenario.
Similarly, the uncertainty-enabled adversarial detection shows promising results for image data sets with near-perfect performance on MNIST.

\bibliographystyle{plainnat}
\bibliography{myrefs}

\appendix
\section{Derivation of the Variational Training Objective}

We provide a detailed derivation of the proposed variational training objective, 
\begin{align}
\mathcal{O} =& \sum_{n=1}^{N}\left(-\frac{1}{2\sqrt{\alpha_{n}}}\left(2\tilde{K}_{n,n}+\left(1+\boldsymbol{\kappa}_{n}\left(\boldsymbol{\mu}_{t_{n}}-\boldsymbol{\mu}_{y_{n}}\right)\right)^{2}+\boldsymbol{\kappa}_{n}\Sigma_{t_n}\boldsymbol{\kappa}_{n}^{\intercal}+\boldsymbol{\kappa}_{n}\Sigma_{y_n}\boldsymbol{\kappa}_{n}^{\intercal}-\alpha_{n}\right)\right.\nonumber\\
 & \phantom{\sum_{i=1}^{n}\left(\right)}\left.-\boldsymbol{\kappa}_{n}\left(\boldsymbol{\mu}_{t_{n}}-\boldsymbol{\mu}_{y_{n}}\right)-\frac{1}{4}\log\alpha_{n}-\log\left(B_{\frac{1}{2}}\left(\sqrt{\alpha_{n}}\right)\right)\right)\nonumber\\
 & \phantom{\sum_{i=1}^{n}\left(\right)}-\frac{1}{2}\sum_{j\in Y}\left(-\log\left|\Sigma_{j}\right|+\text{trace}\left(K^{-1}_{PP}\Sigma_{j}\right)+\boldsymbol{\mu}_{j}^{\intercal}K^{-1}_{PP}\boldsymbol{\mu}_{j}\right)\enspace.
 \end{align}

We assume that $\mathbf{f}$ and $\boldsymbol{\lambda}$ are independent $q\left(\mathbf{f},\boldsymbol{\lambda}\right)=\prod_{j\in Y}q\left(\mathbf{f}_j\right)\prod_{n=1}^{N}q\left(\lambda_{n}\right)$.
Furthermore, we impose a variational sparse approximation where, $q(\mathbf{f}_j) = \int p(\mathbf{f}_j|\mathbf{u}_j)q(\mathbf{u}_j)\mathrm{d}\mathbf{u}_j$ with $p(\mathbf{f}_j|\mathbf{u}_j) = \mathcal{N}\left(\kappa \mathbf{u}_j,\tilde{K}\right)$, $q\left(\mathbf{u}_j\right)=\mathcal{N}\left(\mathbf{u}_{j}|\boldsymbol{\mu}_{j},\Sigma_{j}\right)$, and $q\left(\lambda_{n}\right)=\mathcal{GIG}\left(\frac{1}{2},1,\alpha_{n}\right)$.
We use $p\left(\mathbf{u}_{j}\right)=\mathcal{N}\left(0,K_{PP}\right)$ as a prior on the inducing points.
Here, $\kappa=K_{NP}K^{-1}_{PP}$, $\tilde{K}=K_{NN}-K_{NP}K^{-1}_{PP}K_{PN}$, and $t_{n}=\argmax_{t\in Y,t\neq y_{n}}f_{n,t}$.
In the following, $t_n$ indicates the class index of highest class prediction for a class not equal to the true class $y_{n}$.

The goal of variational inference is to maximize the evidence lower bound (ELBO),
\begin{align}
\text{ELBO} & = \mathbb{E}_{q(\mathbf{u},\boldsymbol{\lambda})}\left[\log p\left(\mathbf{y}|\mathbf{u},\boldsymbol{\lambda} \right)\right] - \operatorname{KL}\left[q\left(\mathbf{u},\boldsymbol{\lambda} \right)||p\left(\mathbf{u},\boldsymbol{\lambda} \right)\right]\\
& = \mathbb{E}_{q(\mathbf{u},\boldsymbol{\lambda})}[\log p(\mathbf{y}|\mathbf{u},\boldsymbol{\lambda)}] + \mathbb{E}_{q(\mathbf{\mathbf{u}},\boldsymbol{\lambda})}[\log p(\mathbf{u}
,\boldsymbol{\lambda})] - \mathbb{E}_{q(\mathbf{u},\boldsymbol{\lambda})}[\log q(\mathbf{u},\boldsymbol{\lambda})]\\
& = \mathbb{E}_{q(\mathbf{u},\boldsymbol{\lambda})}[\log p(\mathbf{y},\mathbf{u},\boldsymbol{\lambda})] - \mathbb{E}_{q(\mathbf{u},\boldsymbol{\lambda})}[\log q(\mathbf{u},\boldsymbol{\lambda})]\\
& = \mathbb{E}_{q(\mathbf{u},\boldsymbol{\lambda})}\left[\log p\left(\mathbf{y},\boldsymbol{\lambda}|\mathbf{u}\right)\right]  + \mathbb{E}_{q(\mathbf{u},\boldsymbol{\lambda})}[\log p\left(\mathbf{u}\right)] - \mathbb{E}_{q(\mathbf{u},\boldsymbol{\lambda})}[\log q(\mathbf{u},\boldsymbol{\lambda})]\\
& = \mathbb{E}_{q(\mathbf{u},\boldsymbol{\lambda})}\left[\log\left[\mathbb{E}_{p(\mathbf{f}|\mathbf{u})}\left[p\left(\mathbf{y},\boldsymbol{\lambda}|\mathbf{f}\right)\right]\right]\right]  + \mathbb{E}_{q(\mathbf{u},\boldsymbol{\lambda})}[\log p\left(\mathbf{u}\right)] - \mathbb{E}_{q(\mathbf{u},\boldsymbol{\lambda})}[\log q(\mathbf{u},\boldsymbol{\lambda})]
\end{align}
We apply Jensen's inequality to the first term to obtain a tractable lower bound, 
\begin{align}
& \log\mathbb{E}_{p\left(\mathbf{f}|\mathbf{u}\right)}\left[p\left(\mathbf{y},\boldsymbol{\lambda}|\mathbf{f}\right)\right]\\
\geq & \mathbb{E}_{p\left(\mathbf{f}|\mathbf{u}\right)}\left[\log p\left(\mathbf{y},\boldsymbol{\lambda}|\mathbf{f}\right)\right]\\
= & \sum_{n=1}^{N}\mathbb{E}_{p\left(\mathbf{f}|\mathbf{u}\right)}\left[\log p\left(y_{n},\lambda_{n}|\mathbf{f}\right)\right]\\
= & \sum_{n=1}^{N}\mathbb{E}_{p\left(\mathbf{f}|\mathbf{u}\right)}\left[\log\left(\frac{1}{\sqrt{2\pi\lambda_{n}}}\exp\left(-\frac{1}{2}\frac{\left(1+\lambda_{n}+f_{t_{n}}\left(\mathbf{x}_n\right)-f_{y_{n}}\left(\mathbf{x}_n\right)\right)^{2}}{\lambda_{n}}\right)\right)\right]\\
\propto & -\frac{1}{2}\sum_{n=1}^{N}\mathbb{E}_{p\left(\mathbf{f}|\mathbf{u}\right)}\left[\log\lambda_{n}+\frac{\left(1+\lambda_{n}+f_{t_{n}}\left(\mathbf{x}_n\right)-f_{y_{n}}\left(\mathbf{x}_n\right)\right)^{2}}{\lambda_{n}}\right]\\
= & -\frac{1}{2}\sum_{n=1}^{N}\log\lambda_{n}+\frac{1}{\lambda_{n}}\mathbb{E}_{p\left(\mathbf{f}|\mathbf{u}\right)}\left[\left(1+\lambda_{n}+f_{t_{n}}\left(\mathbf{x}_n\right)-f_{y_{n}}\left(\mathbf{x}_n\right)\right)^{2}\right]\\
= & -\frac{1}{2}\sum_{n=1}^{N}\log\lambda_{n}+\frac{1}{\lambda_{n}}\left(2\tilde{K}_{n,n}+\left(1+\lambda_{n}+\boldsymbol{\kappa}_{n}\left(\mathbf{u}_{t_{n}}-\mathbf{u}_{y_{n}}\right)\right)^{2}\right)\enspace.
\end{align}
Now we simplify the expectation with respect to the approximate posterior $q(\mathbf{u},\boldsymbol{\lambda})$ as,
\begin{align}
 & \mathbb{E}_{q}\left[\log\lambda_{n}+\frac{1}{\lambda_{n}}\left(2\tilde{K}_{n,n}+\left(1+\lambda_{n}+\boldsymbol{\kappa}_{n}\left(\mathbf{u}_{t_{n}}-\mathbf{u}_{y_{n}}\right)\right)^{2}\right)\right]\\
= & \mathbb{E}_{q}\left[\log\lambda_{n}\right]+\mathbb{E}_{q}\left[\frac{1}{\lambda_{n}}\left(2\tilde{K}_{n,n}+\left(1+\lambda_{n}+\boldsymbol{\kappa}_{n}\left(\mathbf{u}_{t_{n}}-\mathbf{u}_{y_{n}}\right)\right)^{2}\right)\right]\\
= & \mathbb{E}_{q}\left[\log\lambda_{n}\right]+\mathbb{E}_{q}\left[\frac{1}{\lambda_{n}}\left(2\tilde{K}_{n,n}+1+2\lambda_{n}+2\boldsymbol{\kappa}_{n}\left(\mathbf{u}_{t_{n}}-\mathbf{u}_{y_{n}}\right)+\lambda_{n}^{2}+2\lambda_{n}\boldsymbol{\kappa}_{n}\left(\mathbf{u}_{t_{n}}-\mathbf{u}_{y_{n}}\right)\right.\right.\nonumber\\
  & \phantom{\mathbb{E}_{q}\left[\log\lambda_{n}\right]+\mathbb{E}_{q}\left[\frac{1}{\lambda_{n}}\left(\right)\right]}\left.\left. +\left(\boldsymbol{\kappa}_{n}\mathbf{u}_{t_{n}}\right)^{2}+\left(\boldsymbol{\kappa}_{n}\mathbf{u}_{y_{n}}\right)^{2}-2\boldsymbol{\kappa}_{n}\mathbf{u}_{t_{n}}\boldsymbol{\kappa}_{n}\mathbf{u}_{y_{n}}\right)\right]\\
\propto & \mathbb{E}_{q\left(\lambda_{n}\right)}\left[\log\lambda_{n}\right]+\frac{1}{\sqrt{\alpha_{n}}}\left(2\tilde{K}_{n,n}+\left(1+\boldsymbol{\kappa}_{n}\left(\boldsymbol{\mu}_{t_{n}}-\boldsymbol{\mu}_{y_{n}}\right)\right)^{2}+\boldsymbol{\kappa}_{n}\Sigma_{t_n}\boldsymbol{\kappa}_{n}^{\intercal}+\boldsymbol{\kappa}_{n}\Sigma_{y_n}\boldsymbol{\kappa}_{n}^{\intercal}\right)\nonumber\\
 & +\mathbb{E}_{q\left(\lambda_{n}\right)}\left[\lambda_{n}\right]+2\boldsymbol{\kappa}_{n}\left(\boldsymbol{\mu}_{t_{n}}-\boldsymbol{\mu}_{y_{n}}\right)\enspace.
\end{align}
\citet{Wenzel2017} derive the entropy of $q\left(\lambda_{n}\right)$ as
\begin{equation}
\mathbb{E}_{q\left(\lambda_{n}\right)}\left[\log q\left(\lambda_{n}\right)\right]\propto-\frac{1}{4}\log\left(\alpha_{n}\right)-\frac{1}{2}\mathbb{E}_{q\left(\lambda_{n}\right)}\left[\log\left(\lambda_{n}\right)\right]-\log\left(B_{\frac{1}{2}}\left(\sqrt{\alpha_{n}}\right)\right)-\frac{1}{2}\mathbb{E}_{q\left(\lambda_{n}\right)}\left[\lambda_{n}\right]-\frac{\sqrt{\alpha_{n}}}{2}\enspace,
\end{equation}
where $B_{\frac{1}{2}}\left(\cdot\right)$ is the modified Bessel function~\cite{Joergensen1982}.
Plugging these expressions into the derived lower bound on evidence lower bound leads to our variational training objective,
\begin{align}
 & \sum_{n=1}^{N}\left(-\frac{1}{2}\left(\mathbb{E}_{q\left(\lambda_{n}\right)}\left[\log\lambda_{n}\right]+\frac{1}{\sqrt{\alpha_{n}}}\left(2\tilde{K}_{n,n}+\left(1+\boldsymbol{\kappa}_{n}\left(\boldsymbol{\mu}_{t_{n}}-\boldsymbol{\mu}_{y_{n}}\right)\right)^{2}+\boldsymbol{\kappa}_{n}\Sigma_{t_n}\boldsymbol{\kappa}_{n}^{\intercal}+\boldsymbol{\kappa}_{n}\Sigma_{y_n}\boldsymbol{\kappa}_{n}^{\intercal}\right)\right.\right.\nonumber\\
 & \phantom{\sum_{i=1}^{n}\left(\right)}\left.+\mathbb{E}_{q\left(\lambda_{n}\right)}\left[\lambda_{n}\right]+2\boldsymbol{\kappa}_{n}\left(\boldsymbol{\mu}_{t_{n}}-\boldsymbol{\mu}_{y_{n}}\right)-\frac{1}{4}\log\alpha_{n}-\frac{1}{2}\mathbb{E}_{q\left(\lambda_{n}\right)}\left[\log\lambda_{n}\right]\right)\nonumber\\
 & \phantom{\sum_{i=1}^{n}\left(\right)}\left.-\log\left(B_{\frac{1}{2}}\left(\sqrt{\alpha_{n}}\right)\right)-\frac{1}{2}\mathbb{E}_{q\left(\lambda_{n}\right)}\left[\lambda_{n}\right]-\frac{\sqrt{\alpha_{n}}}{2}\right)-\mathrm{KL}\left(q\left(\mathbf{u}\right)\bigparallel p\left(\mathbf{u}\right)\right)\\
& = \sum_{n=1}^{N}\left(-\frac{1}{2\sqrt{\alpha_{n}}}\left(2\tilde{K}_{n,n}+\left(1+\boldsymbol{\kappa}_{n}\left(\boldsymbol{\mu}_{t_{n}}-\boldsymbol{\mu}_{y_{n}}\right)\right)^{2}+\boldsymbol{\kappa}_{n}\Sigma_{t_n}\boldsymbol{\kappa}_{n}^{\intercal}+\boldsymbol{\kappa}_{n}\Sigma_{y_n}\boldsymbol{\kappa}_{n}^{\intercal}-\alpha_{n}\right)\right.\nonumber\\
 & \phantom{\sum_{i=1}^{n}\left(\right)}\left.-\boldsymbol{\kappa}_{n}\left(\boldsymbol{\mu}_{t_{n}}-\boldsymbol{\mu}_{y_{n}}\right)-\frac{1}{4}\log\alpha_{n}-\log\left(B_{\frac{1}{2}}\left(\sqrt{\alpha_{n}}\right)\right)\right)\nonumber\\
 & \phantom{\sum_{i=1}^{n}\left(\right)}-\frac{1}{2}\sum_{j\in Y}\left(-\log\left|\Sigma_{j}\right|+\text{trace}\left(K^{-1}_{PP}\Sigma_{j}\right)+\boldsymbol{\mu}_{j}^{\intercal}K^{-1}_{PP}\boldsymbol{\mu}_{j}\right) = \mathcal{O}
\end{align}

\section{Derivation of Euclidean Gradients}

We derive the Euclidean gradients with respect to $\boldsymbol{\mu}_j$, $\Sigma_j$ and $\alpha_n$ using the notation from the previous section.

\begin{align}
\frac{\partial\mathcal{O}_{n}}{\partial\boldsymbol{\mu}_{j}} & =\frac{\partial}{\partial\boldsymbol{\mu}_{j}}\left[-\frac{1}{2\sqrt{\alpha_{n}}}\left(1+\boldsymbol{\kappa}_{n}\left(\boldsymbol{\mu}_{t_{n}}-\boldsymbol{\mu}_{y_{n}}\right)\right)^{2}-\boldsymbol{\kappa}_{n}\left(\boldsymbol{\mu}_{t_{n}}-\boldsymbol{\mu}_{y_{n}}\right)-\frac{1}{2N}\sum_{j\in Y}\boldsymbol{\mu}_{j}^{\intercal}K_{PP}^{-1}\boldsymbol{\mu}_{j}\right]\\
 & =\begin{cases}
-\frac{1}{\sqrt{\alpha_{n}}}\left(\boldsymbol{\kappa}_{n}^{T}+\boldsymbol{\kappa}_{n}^{T}\boldsymbol{\kappa}_{n}\boldsymbol{\mu}_{j}-\boldsymbol{\kappa}_{n}^{T}\boldsymbol{\kappa}_{n}\boldsymbol{\mu}_{y_{n}}\right)-\boldsymbol{\kappa}_{n}^{T}-\frac{1}{N}K_{PP}^{-1}\boldsymbol{\mu}_{j} & j=t_{n}\\
\frac{1}{\sqrt{\alpha_{n}}}\left(\boldsymbol{\kappa}_{n}^{T}+\boldsymbol{\kappa}_{n}^{T}\boldsymbol{\kappa}_{n}\boldsymbol{\mu}_{t_{n}}-\boldsymbol{\kappa}_{n}^{T}\boldsymbol{\kappa}_{n}\boldsymbol{\mu}_{j}\right)+\boldsymbol{\kappa}_{n}^{T}-\frac{1}{N}K_{PP}^{-1}\boldsymbol{\mu}_{j} & j=y_{n}\\
-\frac{1}{N}K_{PP}^{-1}\boldsymbol{\mu}_{j} & \text{otherwise}
\end{cases}
\end{align}
\begin{align}
\frac{\partial\mathcal{O}_{n}}{\partial\Sigma_{j}} & =\frac{\partial}{\partial\Sigma_{j}}\left[-\frac{1}{2\sqrt{\alpha_{n}}}\left(\boldsymbol{\kappa}_{n}\Sigma_{y_{n}}\boldsymbol{\kappa}_{n}^{\intercal}+\boldsymbol{\kappa}_{n}\Sigma_{t_{n}}\boldsymbol{\kappa}_{n}^{\intercal}\right)-\frac{1}{2N}\sum_{j\in Y}\left(-\log\left|\Sigma_{j}\right|+\text{trace}\left(K_{PP}^{-1}\Sigma_{j}\right)\right)\right]\\
 & =\begin{cases}
-\frac{1}{2\sqrt{\alpha_{n}}}\boldsymbol{\kappa}_{n}^{T}\boldsymbol{\kappa}_{n}-\frac{1}{2N}\left(-\Sigma_{j}^{-1}+K_{PP}^{-1}\right) & j=t_{n}\lor j=y_{n}\\
-\frac{1}{2N}\left(-\Sigma_{j}^{-1}+K_{PP}^{-1}\right) & \text{otherwise}
\end{cases}
\end{align}
\begin{align}
\frac{\partial\mathcal{O}}{\partial\alpha_{n}} & =\frac{\partial}{\partial\alpha_{n}}\left[-\frac{1}{2\sqrt{\alpha_{n}}}\left(2\tilde{K}_{n,n}+\left(1+\boldsymbol{\kappa}_{n}\left(\boldsymbol{\mu}_{t_{n}}-\boldsymbol{\mu}_{y_{n}}\right)\right)^{2}+\boldsymbol{\kappa}_{n}\Sigma_{y_{n}}\boldsymbol{\kappa}_{n}^{\intercal}+\boldsymbol{\kappa}_{n}\Sigma_{t_{n}}\boldsymbol{\kappa}_{n}^{\intercal}-\alpha_{n}\right)\right.\nonumber\\
 & \phantom{\frac{\partial}{\partial\alpha_{n}}\left[\frac{1}{1}\right]}\left.-\frac{1}{4}\log\alpha_{n}-\log\left(B_{\frac{1}{2}}\left(\sqrt{\alpha_{n}}\right)\right)\right]\\
 & =\frac{1}{4\sqrt{\alpha_{n}}^{3}}\left(2\tilde{K}_{n,n}+\left(1+\boldsymbol{\kappa}_{n}\left(\boldsymbol{\mu}_{t_{n}}-\boldsymbol{\mu}_{y_{n}}\right)\right)^{2}+\boldsymbol{\kappa}_{n}\Sigma_{y_{n}}\boldsymbol{\kappa}_{n}^{\intercal}+\boldsymbol{\kappa}_{n}\Sigma_{t_{n}}\boldsymbol{\kappa}_{n}^{\intercal}\right)+\frac{1}{4\sqrt{\alpha_{n}}}\nonumber\\
 & \phantom{=}+\frac{1}{4\alpha_{n}}-\left(\frac{1}{4\alpha_{n}}+\frac{1}{2\sqrt{\alpha_{n}}}\right)\\
 & =\frac{1}{4\sqrt{\alpha_{n}}^{3}}\left(2\tilde{K}_{n,n}+\left(1+\boldsymbol{\kappa}_{n}\left(\boldsymbol{\mu}_{t_{n}}-\boldsymbol{\mu}_{y_{n}}\right)\right)^{2}+\boldsymbol{\kappa}_{n}\Sigma_{y_{n}}\boldsymbol{\kappa}_{n}^{\intercal}+\boldsymbol{\kappa}_{n}\Sigma_{t_{n}}\boldsymbol{\kappa}_{n}^{\intercal}\right)-\frac{1}{4\sqrt{\alpha_{n}}}
\end{align}

\section{Inference with Coordinate Ascent}
\begin{algorithm}[t]
\caption{Coordinate Ascent for inferring variational parameters of Multi-Class Bayesian SVM}
\begin{algorithmic}[1]
\label{alg:ca4svm}
\REQUIRE{Data set $D=\{x_n,y_n\}_{n=1}^N$, learning rate $\rho_t$, initial inducing points}
\STATE Compute $\tilde{K}$, $K_{NP}$ and $K_{PP}$.
\FOR{$t = 1,\ldots,T$}
  \FOR{$n = 1,\ldots,N$}
    \STATE $\alpha_n \leftarrow 2\tilde{K}_{n,n}+\left(1+\boldsymbol{\kappa}_{n}\left(\boldsymbol{\mu}_{t_{n}}-\boldsymbol{\mu}_{y_{n}}\right)\right)^{2}+\boldsymbol{\kappa}_{n}\Sigma_{y_{n}}\boldsymbol{\kappa}_{n}^{\intercal}+\boldsymbol{\kappa}_{n}\Sigma_{t_{n}}\boldsymbol{\kappa}_{n}^{\intercal}$.
  \ENDFOR
  \FOR{$j\in Y$}
    \STATE Compute $\hat{\boldsymbol{\eta}}_{1,j}$ according to Equation \eqref{eq:update-eta1}.
    \STATE Compute $\hat{\boldsymbol{\eta}}_{2,j}$ according to Equation \eqref{eq:update-eta2}.
    \STATE $\boldsymbol{\eta}_{1,j}\leftarrow\left(1-\rho_t\right)\boldsymbol{\eta}_{1,j}+\rho_t \hat{\boldsymbol{\eta}}_{1,j}$
    \STATE $\boldsymbol{\eta}_{2,j}\leftarrow\left(1-\rho_t\right)\boldsymbol{\eta}_{2,j}+\rho_t \hat{\boldsymbol{\eta}}_{2,j}$
    \STATE $\Sigma_j \leftarrow -\frac{1}{2}\boldsymbol{\eta}_{2,j}^{-1}$
    \STATE $\mu_j \leftarrow \Sigma_j\boldsymbol{\eta}_{1,j}$
  \ENDFOR
\ENDFOR
\RETURN $\alpha$, $\mu$, $\Sigma$
\end{algorithmic}
\end{algorithm}

\citet{Wenzel2017} propose the use of coordinate ascent in order to speed up the inference.
While we relied in our experiments on Euclidean gradients combined with stochastic gradient ascent based algorithms, this inference scheme is also applicable for our proposed multi-class inference.
For completeness we derive this algorithm as well.

First, we require to derive the natural gradients by multiplying the Euclidean gradients with the inverse Fisher information matrix~\cite{Amari2007}.
Since we apply it to a Gaussian distribution, this results in the following natural gradients
\[
\widetilde{\nabla}_{\boldsymbol{\eta}_{1,j},\boldsymbol{\eta}_{2,j}}\mathcal{O}_{n}=\left(\frac{\partial\mathcal{O}_{n}}{\partial\boldsymbol{\mu}_{j}}-2\frac{\partial\mathcal{O}_{n}}{\partial\Sigma_{j}}\boldsymbol{\mu}_{j},\ \frac{\partial\mathcal{O}_{n}}{\partial\Sigma_{j}}\right)
\]
Using the identities $\boldsymbol{\eta}_{1,j}=\Sigma_{j}^{-1}\boldsymbol{\mu}_{j}$ and $\boldsymbol{\eta}_{2,j}=-\frac{1}{2}\Sigma_{j}^{-1}$, we derive the final natural gradients,
\begin{align}
\widetilde{\nabla}_{\boldsymbol{\eta}_{2,j}}\mathcal{O}_{n} & =-\frac{1}{2N}\left(-2\boldsymbol{\eta}_{2,j}+K_{PP}^{-1}\right)-\begin{cases}
\frac{1}{2\sqrt{\alpha_{n}}}\boldsymbol{\kappa}_{n}^{T}\boldsymbol{\kappa}_{n} & j=t_{n}\lor j=y_{n}\\
0 & \text{otherwise}
\end{cases}\\
\widetilde{\nabla}_{\boldsymbol{\eta}_{1,j}}\mathcal{O}_{n} & =\left(\frac{1}{\sqrt{\alpha_{n}}}\left(\boldsymbol{\kappa}_{n}^{T}+\frac{1}{2}\boldsymbol{\kappa}_{n}^{T}\boldsymbol{\kappa}_{n}\boldsymbol{\eta}_{2,t_{n}}^{-1}\boldsymbol{\eta}_{1,t_{n}}-\frac{1}{2}\boldsymbol{\kappa}_{n}^{T}\boldsymbol{\kappa}_{n}\boldsymbol{\eta}_{2,y_{n}}^{-1}\boldsymbol{\eta}_{1,y_{n}}\right)+\boldsymbol{\kappa}_{n}^{T}\right)\cdot\begin{cases}
-1 & j=t_{n}\\
1 & j=y_{n}\\
0 & \text{otherwise}
\end{cases}\nonumber\\
 & -\frac{1}{2N}K_{PP}^{-1}\boldsymbol{\eta}_{2,j}^{-1}\boldsymbol{\eta}_{1,j}\nonumber\\
 & -2\left(-\frac{1}{2N}\left(-2\boldsymbol{\eta}_{2,j}+K_{PP}^{-1}\right)-\begin{cases}
\frac{1}{2\sqrt{\alpha_{n}}}\boldsymbol{\kappa}_{n}^{T}\boldsymbol{\kappa}_{n} & j=t_{n}\lor j=y_{n}\\
0 & \text{otherwise}
\end{cases}\right)\frac{1}{2}\boldsymbol{\eta}_{2,j}^{-1}\boldsymbol{\eta}_{1,j}\\
 & =-\frac{1}{N}\boldsymbol{\eta}_{1,j}+\begin{cases}
-\left(\frac{1}{\sqrt{\alpha_{n}}}\left(\boldsymbol{\kappa}_{n}^{T}-\frac{1}{2}\boldsymbol{\kappa}_{n}^{T}\boldsymbol{\kappa}_{n}\boldsymbol{\eta}_{2,y_{n}}^{-1}\boldsymbol{\eta}_{1,y_{n}}\right)+\boldsymbol{\kappa}_{n}^{T}\right) & j=t_{n}\\
\left(\frac{1}{\sqrt{\alpha_{n}}}\left(\boldsymbol{\kappa}_{n}^{T}+\frac{1}{2}\boldsymbol{\kappa}_{n}^{T}\boldsymbol{\kappa}_{n}\boldsymbol{\eta}_{2,t_{n}}^{-1}\boldsymbol{\eta}_{1,t_{n}}\right)+\boldsymbol{\kappa}_{n}^{T}\right) & j=y_{n}\\
0 & \text{otherwise}
\end{cases}\\
 & =-\frac{1}{N}\boldsymbol{\eta}_{1,j}+\boldsymbol{\kappa}_{n}^{T}\begin{cases}
-\left(\frac{1}{\sqrt{\alpha_{n}}}\left(1-\frac{1}{2}\boldsymbol{\kappa}_{n}\boldsymbol{\eta}_{2,y_{n}}^{-1}\boldsymbol{\eta}_{1,y_{n}}\right)+1\right) & j=t_{n}\\
\left(\frac{1}{\sqrt{\alpha_{n}}}\left(1+\frac{1}{2}\boldsymbol{\kappa}_{n}\boldsymbol{\eta}_{2,t_{n}}^{-1}\boldsymbol{\eta}_{1,t_{n}}\right)+1\right) & j=y_{n}\\
0 & \text{otherwise}
\end{cases}
\end{align}
Equating the gradients to zero yields the following update rules
\begin{align}
\boldsymbol{\eta}_{1,j} & =-N\boldsymbol{\kappa}_{n}^{T}\begin{cases}
-\left(\frac{1}{\sqrt{\alpha_{n}}}\left(1-\frac{1}{2}\boldsymbol{\kappa}_{n}\boldsymbol{\eta}_{2,y_{n}}^{-1}\boldsymbol{\eta}_{1,y_{n}}\right)+1\right) & j=t_{n}\\
\left(\frac{1}{\sqrt{\alpha_{n}}}\left(1+\frac{1}{2}\boldsymbol{\kappa}_{n}\boldsymbol{\eta}_{2,t_{n}}^{-1}\boldsymbol{\eta}_{1,t_{n}}\right)+1\right) & j=y_{n}\\
0 & \text{otherwise}
\end{cases}\label{eq:update-eta1}\\
\boldsymbol{\eta}_{2,j} & =-\frac{1}{2}K_{PP}^{-1}-N\begin{cases}
\frac{1}{2\sqrt{\alpha_{n}}}\boldsymbol{\kappa}_{n}^{T}\boldsymbol{\kappa}_{n} & j=t_{n}\lor j=y_{n}\\
0 & \text{otherwise}
\end{cases}\label{eq:update-eta2}
\end{align}

Finally, we summarize the inference algorithm in Algorithm \ref{alg:ca4svm}.
For training the remaining parameters such as inducing point locations and kernel hyperparameters, we propose to use standard gradient updates after every few coordinate ascent steps.
This learning of hyperparameters is often referred to as Type II maximum likelihood~\cite{Rasmussen2006}.

\end{document}